\crefname{section}{Sec.}{Secs.}
\Crefname{section}{Section}{Sections}
\Crefname{table}{Table}{Tables}
\crefname{table}{Table.}{Tabs.}
\begin{document}

\title{Balanced Destruction-Reconstruction Dynamics for Memory-replay Class Incremental Learning}

\author{Yuhang~Zhou, Jiangchao~Yao, Feng~Hong, Ya~Zhang, and Yanfeng~Wang
\IEEEcompsocitemizethanks{\IEEEcompsocthanksitem 
Y. Zhou, J. Yao, Y. Zhang and Y. Wang are with the Cooperative Medianet Innovation Center, Shanghai Jiao Tong University and Shanghai AI Laboratory, Shanghai 200240, China. (E-mail: \{zhouyuhang, sunarker, ya\_zhang, wangyanfeng\}@sjtu.edu.cn). F. Hong is with the Cooperative Medianet Innovation Center, Shanghai Jiao Tong University, Shanghai 200240, China. (E-mail: feng.hong@sjtu.edu.cn)
.}}

\markboth{Journal of \LaTeX\ Class Files,~Vol.~14, No.~8, August~2023}%
{Shell \MakeLowercase{\textit{et al.}}: A Sample Article Using IEEEtran.cls for IEEE Journals}

\maketitle

\begin{abstract}
Class incremental learning (CIL) aims to incrementally update a trained model with the new classes of samples (plasticity) while retaining previously learned ability (stability). To address the most challenging issue in this goal, \textit{i.e.,} catastrophic forgetting, the mainstream paradigm is memory-replay CIL, which consolidates old knowledge by replaying a small number of old classes of samples saved in the memory.
Despite effectiveness, the inherent destruction-reconstruction dynamics in memory-replay CIL are an intrinsic limitation: if the old knowledge is severely destructed, it will be quite hard to reconstruct the lossless counterpart. Our theoretical analysis shows that the destruction of old knowledge can be effectively alleviated by balancing the contribution of samples from the current phase and those saved in the memory.
Motivated by this theoretical finding, we propose a novel Balanced Destruction-Reconstruction module (BDR) for memory-replay CIL, which can achieve better knowledge reconstruction by reducing the degree of maximal destruction of old knowledge.
Specifically, to achieve a better balance between old knowledge and new classes, the proposed BDR module takes into account two factors: the variance in training status across different classes and the quantity imbalance\footnote{For memory-replay CIL, due to the memory cost and the potential privacy risks, typically, a small number of samples from old classes can be saved.} of samples from the current phase and memory. By dynamically manipulating the gradient during training based on these factors, BDR can effectively alleviate knowledge destruction and improve knowledge reconstruction.
Extensive experiments on a range of CIL benchmarks have shown that as a lightweight plug-and-play module, BDR can significantly improve the performance of existing state-of-the-art methods with good generalization.
Our code is publicly available \href{https://github.com/zyuh/BDR-main/tree/main}{here}.
\end{abstract}

\begin{IEEEkeywords}
Continual Learning, Catastrophic Forgetting, Memory Replay, Data Imbalance.
\end{IEEEkeywords}

\section{Introduction}
\IEEEPARstart{A}{lthough} deep neural networks (DNNs) have achieved success in many computer vision tasks~\cite{he2016deep, long2015fully, ren2015faster, krizhevsky2017imagenet}, most explorations are conducted in a static setting, which cannot promote new knowledge continually acquired from the varying environments. To empower DNNs with this ability, Class incremental learning (CIL) are introduced to modulate the deep models for continually learning from new classes without forgetting the learnt knowledge in old classes~\cite{castro2018end, rebuffi2017icarl}.

\begin{figure}[t]
\centering{\includegraphics[width=0.95\linewidth]{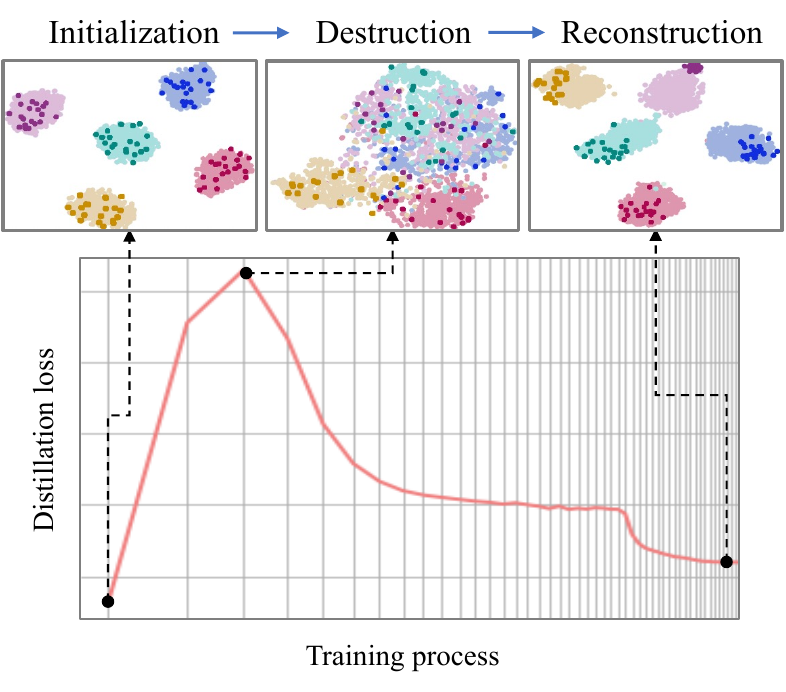}}
\caption{The destruction-reconstruction dynamic in memory-replay CIL. In this figure, we plot the curve of the CwD's distillation loss~\cite{shi2022mimicking} and visualize the embedding distributions of the representative samples from old classes at the initial stage, the peak stage and the converged stage.
In all tSNE visualization, the foreground points are the projection of representative samples and the background points are the footprint of all samples in old classes. We can find that the training goes through the knowledge destruction and the reconstruction \textit{w.r.t.} representative samples, where the representative samples are not representative as the initialization.}
\label{fig1}
\end{figure}

The core challenge in class incremental learning is how to avoid the catastrophic forgetting issue~\cite{goodfellow2013empirical, kirkpatrick2017overcoming, mccloskey1989catastrophic}. To achieve this goal, extensive methods have been explored in the recent years~\cite{ahn2021ss, hou2019learning, hu2021distilling, liu2021adaptive, liu2020mnemonics, tao2020topology, zhu2021class}. 
One line of works can be summarized as a regularization-based way~\cite{kirkpatrick2017overcoming, chaudhry2018riemannian, zenke2017continual, aljundi2018memory}, which consolidates the old knowledge via regularization terms when learning on new classes. 
However, the brute-force regularization directly limits the plasticity of the model and usually results in a poor performance~\cite{hsu2018re, van2019three}.
The architecture-based methods explore the parameter isolation idea to reduce the interference among different phases, which helps avoid catastrophic forgetting by architecture design~\cite{fernando2017pathnet, mallya2018packnet, mallya2018piggyback, serra2018overcoming}. Nevertheless, the  architecture size increases along with incoming classes and the requirement of task identification during inference limits its use. 
Currently, memory-replay CIL~\cite{rebuffi2017icarl, wu2019large, castro2018end, belouadah2019il2m} has become a  popular choice as it achieves an impressive trade-off between plasticity and stability by only saving a small number of representative samples and replaying~\cite{liu2021adaptive,hou2019learning,liu2020mnemonics,shi2022mimicking}.

Nevertheless, Figure~\ref{fig1} shows that memory-replay CIL experiences a destruction-reconstruction dynamic when distilling representative samples of old classes to avoid catastrophic forgetting~\cite{wang2021knowledge,hinton2015distilling,yim2017gift}. 
This destruction-reconstruction dynamic presents an intrinsic limitation: if the old knowledge is severely destructed, \textit{e.g.,} the tSNE characterization at the peak stage in Figure~\ref{fig1}, the model almost completely forgotten how to distinguish the old classes. Then, in the later stage, the model has to re-learn the classification from limited representative samples of old classes together with a large number of samples from the new phase. The resulting problem is that it is quite hard to reconstruct the lossless old knowledge any more, and it becomes harder along with the increase of the destruction.

To deal with this dilemma, we first theoretically characterize that the destruction of the old knowledge can be actually alleviated by balancing the gradient contribution of samples. Then, inspired by this finding, we propose a novel Balanced Destruction-Reconstruction module (BDR) for memory-replay CIL to achieve better knowledge reconstruction by reducing the maximal destruction of old knowledge.
Specifically, the proposed BDR tasks into account two factors: the variance in training status across different classes and the quantity imbalance of samples from the current phase and memory.
By dynamically manipulating the gradient based on above two factors during incremental training, BDR can help the model adjust the degree of maximal destruction for old knowledge while gradually learning new knowledge, achieving a better trade-off between plasticity and stability. 
Experimental results on a large number of CIL benchmarks demonstrate that our method successfully reduces the degree of destruction of old knowledge and achieves better knowledge reconstruction.

In a nutshell, the contribution of this paper can be summarized as follows:
\begin{itemize}
    \item We identify that the hidden destruction-reconstruction dynamics about old knowledge can be an intrinsic limitation to memory-replay CIL methods, and theoretically show that it is possible to alleviate this problem by balancing the contribution of different samples.
    \item We propose a novel Balanced Destruction-Reconstruction module (BDR) to mitigate the negative impact of old knowledge destruction for better knowledge reconstruction by dynamically adjusting the logits based on the variance of training status and the quantity imbalance.
    \item BDR can be easily plugged in the existing state-of-the-art memory-replay CIL methods, and extensive experimental results on a range of CIL benchmarks demonstrate the effectiveness and generalization of BDR.
\end{itemize}

\section{Related Work}
The existing CIL methods can be divided into three categories: \emph{regularization-based} methods, \emph{architecture-based} methods and \emph{memory-replay} methods. 

\emph{Regularization-based methods} mitigate catastrophic forgetting by limiting changes of effective parameters in the training~\cite{chaudhry2018riemannian, zenke2017continual, aljundi2018memory}. EWC~\cite{kirkpatrick2017overcoming} estimated the importance of model parameters via a diagonal approximation of Fisher Information Matrix. SI~\cite{zenke2017continual} accumulated the changes in each parameter during training to evaluate the importance of the parameters more accurately. 
RWalk~\cite{chaudhry2018riemannian} integrated both ideas and used exemplars to further improve the results. 
However, such methods often fail to achieve satisfactory performance because such parameter-level constraints severely limit the plasticity of the model~\cite{hsu2018re,van2019three}.

\emph{Architecture-based methods} construct independent parameters for each phase to avoid interference among classes from different phases~\cite{fernando2017pathnet, mallya2018packnet, mallya2018piggyback, serra2018overcoming, rusu2016progressive}. Such methods are usually used in task incremental learning (TIL) settings~\cite{shi2021continual, tang2021layerwise, wang2021training} that allow task identifiers to be provided at inference time. Abati et al.~\cite{abati2020conditional} used task-specific light-weight gating modules to prevent catastrophic forgetting and dynamically used learned knowledge to improve performance. Rajasegaran et al.~\cite{rajasegaran2019random} used an optimal path selection approach that support parallelism and knowledge exchange between old and new classes to keep the proper balance between old knowledge and new classes. However, the key problem in this direction is that the increasing model size is not practical for long incremental learning sequences.

\emph{Memory-Replay methods} use a memory buffer to save a small proportion of representative samples from old classes and together with samples in the current phase to train the model~\cite{rebuffi2017icarl, wu2019large, castro2018end, belouadah2019il2m}. By replaying samples of old classes, these methods can better alleviate catastrophic forgetting and currently achieve the state-of-the-art on various benchmarks~\cite{liu2021adaptive,hou2019learning,shi2022mimicking}. 
UCIR~\cite{hou2019learning} built the fine-grained regularization, namely cosine normalization, forget-less constraint, and inter-class separation, to mitigate the imbalance of classifier weights. 
PODNet~\cite{douillard2020podnet} proposed a distillation loss that constrains the evolution of the representation which remained stable over long runs of small incremental tasks.
AANet~\cite{liu2021adaptive} explicitly build two types of residual blocks at each residual level for plasticity and stability respectively, and trained aggregation weights to balance them, which introduces additional parameters, but the model size does not continuously increase.
CwD~\cite{shi2022mimicking} effectively regularizes representations of each class to scatter more uniformly, thus mimicking the model jointly trained with all classes.
DER~\cite{yan2021dynamically} developed a dynamically expandable representation and a two-stage strategy to achieve better stability-plasticity trade-off.
DyTox~\cite{douillard2022dytox} used a transformer architecture and specialized each forward of our decoder network on a task distribution through a dynamic expansion of special tokens.
More recently, L2P~\cite{wang2022learning} and DualPrompt~\cite{wang2022dualprompt} methods have also achieved satisfying performance, benefiting from more powerful pre-trained models, i.e., ViT~\cite{dosovitskiyimage}, and prompt learning~\cite{lester2021power}.

Note that, we would like to claim that the exemplar-free class incremental learning methods, such as the regularization-based and architecture-based methods mentioned above are important directions. Actually, they can be combined with the memory-replay CIL for improvement. However, in this paper, we are not to criticize and compare with these methods, but aim to deal with the dilemma exhibited in memory-replay CIL.

\section{Methods}
\subsection{Preliminary}
Assume that we have $\{0,1,\dots,T\}$ training phases, 1 initial phase followed by $T$ incremental phases.
Let $\mathcal{D}_t=\{(x_{t,i},y_{t,i})\}_{i=1}^{N_t}$ be the collection of $N_t$ samples emerging in the $t$-th phase, 
and $K_t$ denotes the corresponding category number. In the $0$-th phase, we train the model $\varTheta_0$ on $\mathcal{D}_0$ using a conventional classification loss. In the subsequent $t$-th phase, memory-replay CIL methods use a memory to save only a small number of representative samples of old classes after training, \textit{i.e.,} $M_{t} = \{(x_{t,i},y_{t,i})\}_{i=1}^{N_{M_{t}}}$($N_{M_{t}}\ll N_t$). Note that, it is impractical to save all samples due to the memory cost, retraining cost and privacy risks. Then, for the next incremental phase, the training dataset after combining is $\widetilde{\mathcal{D}}_{t+1}$, where $\widetilde{\mathcal{D}}_{t+1}=\mathcal{D}_{t+1}\cup M_{0:t}$.

\subsection{Motivation}
\label{mov}
In almost all memory-replay CIL, the loss function ($l$) can be divided into two parts: learning new knowledge ($l_{new}$) and consolidating old knowledge ($l_{old}$), namely, $l=l_{new}+l_{old}$.
During the initial stages of training in the $t$-th incremental phase, $l_{new}$ typically plays a more critical role than $l_{old}$, as $\varTheta_{t}$ is initialized with $\varTheta_{t-1}$, which has already learned the old knowledge, \textit{i.e.,} $l_{old}$ is small enough during this time (as shown in Figure~\ref{fig1}'s ``Intialization'').
The model only starts to re-learn old knowledge when it has been destructed to a certain degree (as shown in Figure~\ref{fig1}'s ``Destruction"). We refer to this turning point as \emph{maximal destruction}.
However, since only a limited number of old samples can be replayed, extremely severe forgetting makes it difficult to reconstruct old knowledge as initialization (as shown in Figure~\ref{fig1}'s ``Reconstruction").
Fortunately, we theoretically find that the upper bound of the maximal destruction can be mitigated by balancing the gradient contributions of samples as in the following theorem.

\newtheorem{thm}{\bf Theorem}
\newtheorem{theo}{\bf Theorem}
\begin{thm}\label{thm1}
Let $F_t^{max}(\theta)$ denotes the ``maximal destruction" in $t$-th phase, which corresponds to the difference between the initial loss and the  highest loss observed during the training process. $L_t$ is the average cross-entropy loss, and suppose all previous phases are fully-learnt, then at the early stage of training in the $t$-th phase, $F_t^{max}(\theta)$ has an upper bound:
\begin{align}
F_t^{max}(\theta)&\le\frac{N_s}{2}\alpha^2[\sigma_m(\sum_{j=1}^{t-1} H_j)]\sum_{s=1}^{N_s}||\nabla L_t(\theta_{s-1})||^2_2 + c, \nonumber
\end{align}
where $\alpha$ is the learning rate, $\sigma_m(\cdot)$ means the maximal eigenvalue, $H_j$ is the Hessian matrix of the $j$-th phase, $N_s$ is the update step from initialization to ``maximal destruction", and $c$ is a constant. An adjustable factor here to affect $F_t^{max}(\theta)$ is $||\nabla L_t(\theta) ||_2^2$, which has a following minimum value:
\begin{align}
||\nabla L_t(\theta) ||_2^2 &\ge \frac{4}{N_t^2}  (\sum_{i=1}^{N_{new}} \nabla l_{ce}(x^i_{new}))( \sum_{i=1}^{N_{old}} \nabla l_{ce}(x^i_{old})). 
\nonumber
\end{align}
Note that the above inequality takes an equal sign when the contributions of both new classes and old classes are equal\footnote{Unlike a conventional upper bound to optimize, here we focus on the equal sign of this lower bound to similarly pursue the infimum of $||\nabla L_t(\theta) ||_2^2$.}.

\end{thm}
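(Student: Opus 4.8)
The plan is to prove the two displayed inequalities separately, each resting on a one-step Taylor expansion followed by an elementary inequality. For the first bound, I would regard ``destruction'' as being measured on the aggregated old-class loss $L^{old}=\sum_{j=1}^{t-1}L_j$, whose Hessian is $\sum_{j=1}^{t-1}H_j$. Unrolling the gradient-descent recursion $\theta_s=\theta_{s-1}-\alpha\nabla L_t(\theta_{s-1})$ from initialization to the step $N_s$ at which the highest loss is attained gives $\theta_{N_s}-\theta_0=-\alpha\sum_{k=1}^{N_s}\nabla L_t(\theta_{k-1})$. Because all previous phases are assumed fully-learnt, $\theta_0=\varTheta_{t-1}$ is a stationary point of $L^{old}$, so the linear term in the Taylor expansion of $L^{old}$ about $\theta_0$ vanishes and $F_t^{max}(\theta)=L^{old}(\theta_{N_s})-L^{old}(\theta_0)\le\tfrac12(\theta_{N_s}-\theta_0)^{\top}\bigl(\sum_{j=1}^{t-1}H_j\bigr)(\theta_{N_s}-\theta_0)+c$, where $c$ collects the third- and higher-order remainder, which is negligible at the early stage of training. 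Bounding the quadratic form by $\sigma_m(\sum_{j=1}^{t-1}H_j)\,\|\theta_{N_s}-\theta_0\|_2^2$ and then applying Cauchy--Schwarz to the telescoped displacement, $\|\sum_{k=1}^{N_s}\nabla L_t(\theta_{k-1})\|_2^2\le N_s\sum_{s=1}^{N_s}\|\nabla L_t(\theta_{s-1})\|_2^2$, produces exactly the stated inequality; the factor $N_s$ is precisely where this last step enters.

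For the second bound, split the aggregated gradient as $\nabla L_t(\theta)=\tfrac{1}{N_t}(g_{new}+g_{old})$ with $g_{new}=\sum_{i=1}^{N_{new}}\nabla l_{ce}(x^i_{new})$ and $g_{old}=\sum_{i=1}^{N_{old}}\nabla l_{ce}(x^i_{old})$, so that $\|\nabla L_t(\theta)\|_2^2=\tfrac{1}{N_t^2}\bigl(\|g_{new}\|_2^2+\|g_{old}\|_2^2+2\langle g_{new},g_{old}\rangle\bigr)$. Chaining the AM--GM inequality $\|g_{new}\|_2^2+\|g_{old}\|_2^2\ge 2\|g_{new}\|_2\|g_{old}\|_2$ with the Cauchy--Schwarz inequality $\|g_{new}\|_2\|g_{old}\|_2\ge\langle g_{new},g_{old}\rangle$ gives $\|\nabla L_t(\theta)\|_2^2\ge\tfrac{4}{N_t^2}\langle g_{new},g_{old}\rangle$, which is the claimed minimum. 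Equality in both steps forces $\|g_{new}\|_2=\|g_{old}\|_2$ together with co-linearity of $g_{new}$ and $g_{old}$, i.e. $g_{new}=g_{old}$, which is exactly the statement that the gradient contributions of new and old samples coincide — the condition that BDR is designed to approach.

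The step I expect to be the main obstacle is the first one: one must be careful that the optimizer follows the full gradient $\nabla L_t$ while the quadratic term only involves the old-phase Hessians, and one must justify (i) that the fully-learnt hypothesis legitimately annihilates the linear term in the expansion of $L^{old}$ and (ii) that, at the early stage of training, the displacement $\|\theta_{N_s}-\theta_0\|_2$ is small enough for the Taylor remainder to be absorbed into the constant $c$. Everything afterwards — the maximal-eigenvalue bound, the Cauchy--Schwarz telescoping, and the AM--GM/Cauchy--Schwarz manipulation of the second inequality — is routine.
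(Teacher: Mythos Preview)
Your proposal is correct and follows the paper's overall arc (second-order Taylor expansion, maximal-eigenvalue bound, Cauchy--Schwarz on the telescoped displacement, then an AM--GM/Cauchy step for the second inequality), but you take a cleaner route in the first part. The paper does \emph{not} expand $L^{old}$ about the single point $\theta_{t-1}^\ast$; instead it Taylor-expands each $L_j$ about its own minimizer $\theta_j^\ast$ (this is how ``fully learnt'' is used), then rewrites $\theta-\theta_j^\ast=(\theta-\theta_{t-1}^\ast)+(\theta_{t-1}^\ast-\theta_j^\ast)$, which produces a cross term $(\theta-\theta_{t-1}^\ast)^\top\sum_{j=1}^{t-2}H_j(\theta_{t-1}^\ast-\theta_j^\ast)$ that is eliminated by invoking Lemma~7 of \cite{liu2022continual}. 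Under the quadratic model, $H_j(\theta_{t-1}^\ast-\theta_j^\ast)\approx\nabla L_j(\theta_{t-1}^\ast)$, so that lemma's conclusion is essentially the same as your assumption that $\theta_{t-1}^\ast$ is a stationary point of $L^{old}$; you are therefore short-circuiting the paper's detour. What your approach buys is directness and no external lemma; what the paper's buys is that ``fully learnt'' only needs to mean $\nabla L_j(\theta_j^\ast)=0$ for each $j$ separately (with the coupling across phases delegated to the cited lemma), and that each $H_j$ is naturally evaluated at $\theta_j^\ast$ rather than all at $\theta_{t-1}^\ast$. For the second inequality the paper simply says ``by Cauchy's inequality''; your AM--GM plus Cauchy--Schwarz derivation is the natural unpacking of that, and your equality analysis ($g_{new}=g_{old}$) matches the paper's conclusion.
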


Theorem~\ref{thm1} means that reducing $||\nabla L_t(\theta_{s-1})||^2_2$ can help to reduce the upper bound of $F_t^{max}(\theta)$, and it is possible to implement this by balancing the gradient contribution of samples from new classes and old classes. Specially, more balanced the gradients are, smaller $||\nabla L_t(\theta)||^2_2$ is. For the complete proof, we kindly refer readers to the Appendix.

\begin{figure*}[t]
    \centering
    \includegraphics[width=0.97\linewidth]{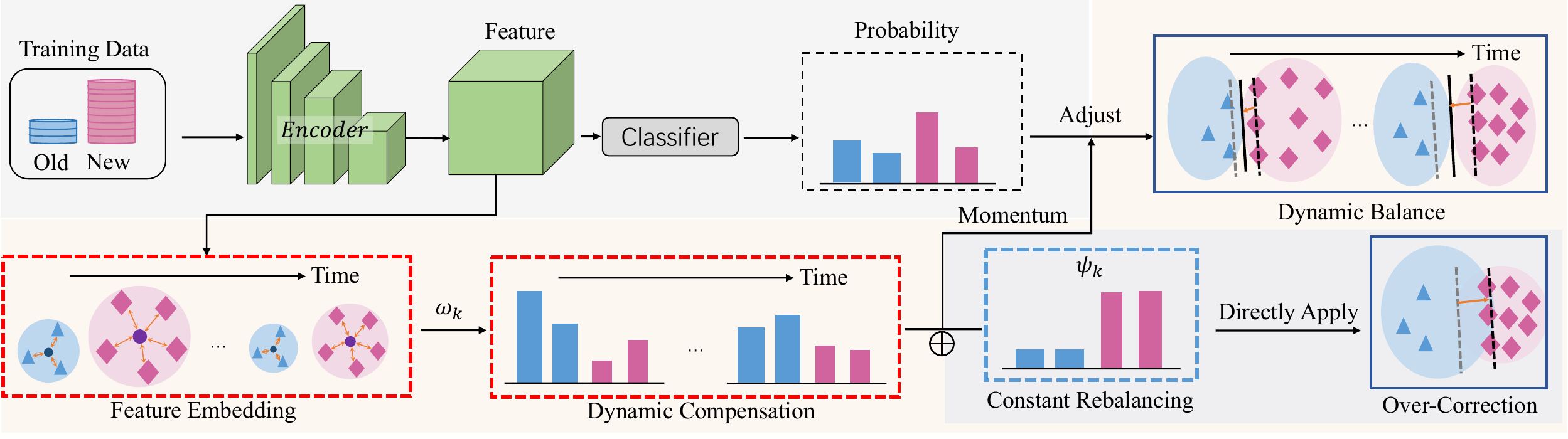}
    \caption{An illustration of BDR, which consists of a constant rebalancing term to eliminate the quantity imbalance of data on the gradient contribution, and a dynamic compensation term to capture the training dynamics and alleviate the over-correction. BDR can effectively alleviate the destruction about old knowledge in memory-replay CIL by manipulating the logit space, yielding a better knowledge reconstruction.}
    \label{method}
\end{figure*}

\subsection{Balanced Destruction-Reconstruction}
\label{3.3}
Inspired by the implication of Theorem~\ref{thm1}, we propose to use the softmax saturation effect~\cite{chen2017noisy, zhang2020class2, li2022long} to balance the gradient contribution of samples.
The principle behind this effect is that the gradient contribution of samples can be influenced by manipulating logits.
To clarify this concept, consider the following gradient of a binary classification as an example:
\begin{align}
\frac{\partial l_{ce}}{\partial z_1}=-\frac{1}{1+e^{z_1-z_2}},
\label{eq:softmax_saturation}
\end{align}
where $z_k$ is the output logit of the $k$-th class.
Eqn.~\eqref{eq:softmax_saturation} shows that as $z_1$-$z_2$ increases, the gradient quickly decays to zero. 
Similarly, we can leverage the manipulation of output logits as a means to adjust the gradient contribution of different samples. 
Concretely, let us use a dynamic offset term $\Delta z_k$ for each class $k$ to influence its gradient and get the modified logits $\widetilde{z_k}$ as follows: 
\begin{align}
\begin{split}
\Delta z_k &\propto \ln(m\psi_k+(1-m)\omega _{k})^\tau, \\
\widetilde{z_k} &= {z_k}+ \Delta z_k, \\
\widetilde{z_1}-\widetilde{z_2} &= (z_1 + \Delta z_1 ) - (z_2 + \Delta z_2).
\label{eq:logit_adjustment}
\end{split}
\end{align}
Here, $m$ is a hyper-parameter, $\psi _k$ represents a constant rebalancing term that addresses the quantity imbalance between new and old classes, and $\omega_{k}$ is a dynamic compensation term that captures the variance of the training status.
A detailed explanation of this design is provided in the following sections.


\par{\noindent \bf On Constant Rebalancing.\hspace{3pt}}
To balance gradient contribution of new classes and old classes, we first need to eliminate the impact of quantity imbalance.
we use Lemma \ref{lemma1} to explain how to achieve this and prove it in the appendix.
\newtheorem{lemma}{Lemma}
\newtheorem{lemmaa}{Lemma}
\begin{lemma}
\label{lemma1}
Let $E_{bal}$ represent the balanced error, which is the average of each per-class error rate, and let $R_{bal}$ represent the balanced risk, given by:
\begin{align}
\begin{split}
E_{bal} &= \frac{1}{K}\sum_{y=1}^K\mathbb{P}_{x|y}(y\notin \mathop{\arg\max}_{{k}'} f_{{k}'}(x)), \\
R_{bal} &= \mathbb{E}[l_{ce}(f(x)+\ln\psi_{},y)],\nonumber
\label{eq:ber}
\end{split}
\end{align}
where $\psi_{}=[\psi_1, \psi_2,..., \psi_K]$ represents the distribution of the sample number of classes, and $\psi_k = \frac{N_k}{\sum_{i=1}^{K}N_i}$ ($N_i$ represents the sample number of $i$-th class and $K$ represents the total number of classes).
The optimal classifier that minimizes $E_{bal}$ is equivalent to the one learned by minimizing $R_{bal}$ ~\cite{menonlong}.
\end{lemma}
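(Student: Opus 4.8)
The plan is to identify both objectives with the Bayes-optimal decision rule under the class-balanced distribution and show they induce the same classifier. First I would introduce the balanced data distribution $\mathbb{P}^{\mathrm{bal}}(x,y) := \tfrac{1}{K}\,\mathbb{P}(x\mid y)$, under which every class is equally likely; with this notation $E_{bal}$ is exactly the $0/1$-risk of the argmax-prediction of $f$ under $\mathbb{P}^{\mathrm{bal}}$. Standard Bayes-decision theory then gives that $E_{bal}$ is minimized by any $f$ with $\arg\max_k f_k(x) = \arg\max_k \mathbb{P}^{\mathrm{bal}}(y=k\mid x)$ for a.e.\ $x$. Applying Bayes' rule and using that the $x$-marginal of $\mathbb{P}^{\mathrm{bal}}$ does not depend on the label, we get $\mathbb{P}^{\mathrm{bal}}(y=k\mid x) \propto \mathbb{P}(x\mid y=k) = \mathbb{P}(y=k\mid x)/\psi_k$ up to an $x$-dependent, $k$-independent factor, so the target rule is $\arg\max_k \mathbb{P}(y=k\mid x)/\psi_k$.

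Second I would compute the minimizer of $R_{bal}$ pointwise in $x$ over the unrestricted class of scorers $f:\mathcal{X}\to\mathbb{R}^K$. Expanding $l_{ce}$, one writes $R_{bal} = \mathbb{E}_x\big[-\sum_k \mathbb{P}(y=k\mid x)\log q_k(x)\big]$ with $q_k(x) = \psi_k e^{f_k(x)}/\sum_{k'}\psi_{k'}e^{f_{k'}(x)}$, i.e.\ the cross-entropy between the true posterior $\mathbb{P}(\cdot\mid x)$ and the model distribution $q(\cdot\mid x)$. By Gibbs' inequality (nonnegativity of KL divergence), for each $x$ this is minimized precisely when $q_k(x) = \mathbb{P}(y=k\mid x)$, that is $\psi_k e^{f_k^\star(x)} \propto \mathbb{P}(y=k\mid x)$, so every minimizer satisfies $e^{f_k^\star(x)} \propto \mathbb{P}(y=k\mid x)/\psi_k$, with the proportionality constant depending on $x$ but not on $k$ (the shift-invariance of softmax, which does not affect the argmax). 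Comparing with the first step, $\arg\max_k f_k^\star(x) = \arg\max_k \mathbb{P}(y=k\mid x)/\psi_k$ is exactly the Bayes-optimal rule for $E_{bal}$, which yields the claimed equivalence.

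The main obstacle I anticipate is not the algebra but the care needed for well-posedness. The pointwise argument requires the scorer class to be expressive enough to realize the posterior at a.e.\ $x$; the minimizer $f^\star$ is unique only up to an additive, $x$-dependent constant, so one must phrase the conclusion in terms of the induced argmax classifier rather than $f^\star$ itself; and ties in the argmax have to be handled, either by showing they occur on a $\mathbb{P}$-null set or by fixing an arbitrary tie-breaking convention, so that ``the optimal classifier'' is meaningful. I would also state the implicit regularity assumptions explicitly, in particular that $\psi_k>0$ for all $k$ so that $\ln\psi_k$ and division by $\psi_k$ are well-defined, and that the relevant expectations are finite; with these in place the three steps above assemble into a short, self-contained proof.
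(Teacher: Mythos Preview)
Your proposal is correct and follows essentially the same route as the paper: characterize the Bayes-optimal rule for $E_{bal}$ as $\arg\max_k \mathbb{P}(x\mid y=k)$ (equivalently $\arg\max_k \mathbb{P}(y=k\mid x)/\psi_k$), characterize the minimizer of $R_{bal}$ via the softmax/cross-entropy calibration, and match the two argmaxes. The only difference is presentational: the paper outsources the two key facts to citations (Theorem~1 of Collell et al.\ for the balanced-error Bayes rule and Lemma~1 of Yu et al.\ for the cross-entropy minimizer recovering $\mathbb{P}(y\mid x)$), whereas you derive them from scratch via Bayes decision theory and Gibbs' inequality, and you additionally spell out the well-posedness caveats (expressiveness of the scorer class, shift non-uniqueness, ties, $\psi_k>0$) that the paper leaves implicit.
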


Based on Lemma~\ref{lemma1}, we can modify the conventional cross-entropy loss formulation to train a well-balanced classifier for all classes. The updated formula can be expressed as follows:
\begin{equation}
\begin{split}
l_{bal\_ce} = l_{ce}(f(x)+ \ln \psi, y).
\label{eq:new_ce1}
\end{split}
\end{equation}
When using Eqn.~\eqref{eq:new_ce1} for training, new classes will have a larger constant rebalancing value due to their quantity advantage\footnote{Typically, the representative samples of old classes saved in the memory are significantly less than the new classes of samples in terms of the quantity.}, i.e., $\psi_{new} > \psi_{old}$, which will increase the difference $\widetilde{z}_{new}-\widetilde{z}_{old}$ following Eqn.~\eqref{eq:logit_adjustment}. Then, considering the softmax saturation effect, the gradient contribution of new classes will be suppressed, and the more severe the quantity imbalance is, the more obvious this suppression will be.
In total, the constant rebalancing term $\ln\psi_{k}$ works by suppressing the dominant gradient of new classes to promote a balancing trend.

\par{\noindent \bf Limitation.\hspace{3pt}}
\label{cr_limit}
However, purely using Eqn.~\eqref{eq:new_ce1} might not be very reliably, which we utilized CwD~\cite{shi2022mimicking} as the base method to characterize the use of Eqn.~\eqref{eq:new_ce1} with ``CwD w constant rebalancing (CR)".  According to the results in Table~\ref{table_limit0}, the impact of initialization ($\varTheta_{t}$ is initialized with $\varTheta_{t-1}$) is reflected in the significantly smaller loss of old classes compared to new classes, with the loss of old classes being one order of magnitude smaller than that of new classes. And applying Eqn.~\eqref{eq:new_ce1} results in a notable improvement in the final accuracy of old classes, indicating that CR is helpful in reconstructing old knowledge more effectively.
However, the worse convergence loss and final accuracy of new classes suggest that using CR comes at the expense of worse training for new classes.
We refer to this issue as the \emph{over-correction}, which requires the introduction of an additional mechanism to alleviate it.

\begin{table}[t]
\caption{The initial loss, convergence loss and final accuracy of old classes and new classes with and without Constant Rebalancing (CR) on CIFAR-100 based on CwD.}
\centering
\resizebox{0.97\linewidth}{!}{
\begin{tabular}{c|c|ccc}
\toprule
Method & Task & Initial & Convergence & Final Acc\\
\midrule
\multirow2{*}{CwD~\cite{shi2022mimicking}}  & $Old$ & 0.9018 & 0.00001 & 67.2\%  \\
& $New$ & 10.5097 & 0.0851 & 82.1\% \\
\midrule
\multirow2{*}{CwD~\cite{shi2022mimicking} w CR}  & $Old$ & 0.9141 & 0.0004 & 75.3\%   \\
& $New$ & 7.6158 & 0.5732 & 57.1\% \\
\bottomrule
\end{tabular}}
\label{table_limit0}
\end{table}

\vspace{0.1cm}
\par{\noindent \bf On Dynamic Compensation.\hspace{3pt}}
To avoid over-correction, we propose to simultaneously consider the training status along with considering quantity imbalance.
This is because at the initialization from previously trained model, it is inherently local optimal for old classes, but non-optimal for new classes, which incurs discrepancy of gradient contribution at the beginning.
Therefore, it is crucial to track the training dynamic starting from the initialization and transform it as a dynamic compensation term for balancing.
Here, we design a compensation $\omega_{k}$ to realize the above intuition as follows:
\begin{align}
\begin{split}
&\Sigma_{W_k} = \mathbb{E}[f(x_{k,i})-\bar f_k]^2,\\
& \omega_{k} = \frac{1\slash \Sigma_{W_k}}{\sum_{k'}1\slash\Sigma_{W_{k'}}},
\label{eq:variance}
\end{split}
\end{align}
where $f(x_{k,i})$ denotes the feature of the $i$-th sample in the $k$-th class and $\bar f_k = \frac{1}{N_{k}}\sum_{i=1}^{N_{k}}f(x_{k,i})$.
The intuition behind this design is using intra-class variance to estimate the training status. 
Due to the initialization and the limited number of old class samples, their intra-class variance is significantly higher than that of new classes, i.e., $\Sigma_{W_{old}} < \Sigma_{W_{new}}$ and $\omega_{old} > \omega_{new}$. 
As training progresses, the difference between the two gradually diminishes. This characteristic makes it neat to complement with $\psi_{k}$ and thus we design the following term:
\begin{align}
\begin{split}
\pi_k = m\psi_{k} &+ (1-m)\omega_{k},\\
\Delta z_k &\propto \ln(\pi_k)^\tau,
\label{eq12}
\end{split}
\end{align}
where $m$ and $\tau$ are hyper-parameters.
During the initial training phase, the over-correction effect is suppressed due to $\omega_{old} > \omega_{new}$ and $\psi_{new} > \psi_{old}$, allowing the model to learn new knowledge normally. As training progresses, $\omega_{old} - \omega_{new}$ gradually decreases, reducing the inhibitory effect on over-correction. The over-correction effect is then forced to prevent the model from being biased towards new classes.

\vspace{0.1cm}
\par{\noindent \bf The Momentum Update.\hspace{3pt}}
To avoid the variance fluctuations caused by frequent feature updating during training, we adopt a momentum update mechanism on $\bar f_k$ and $\Sigma_{W_k}$ to stabilize the update of $\omega_{k}$ ($\bar f_k^{'}$, $\Sigma_{W_k}^{'}$ and $\omega_{k}^{'}$ represent momentum updated version of $\bar f_k$, $\Sigma_{W_k}$ and $\omega_{k}$ respectively):
\begin{align}
\begin{split}
\bar f_k^{'} = \frac{n}{n+n_k}\bar f_k &+ \frac{1}{n+n_k}\sum_{i=1}^{n_k}f(x_{k,i}), \\
\Sigma_{W_k}^{'} = \frac{n}{n+n_k}\Sigma_{W_k} &+ \frac{1}{n+n_k}\sum_{i=1}^{n_k}(f(x_{k,i})-\bar f_k^{'})^2, \\
\omega_{k}{'} &= \frac{1\slash \Sigma_{W_k}^{'}}{\sum_{k'}1\slash\Sigma_{W_{k'}^{'}}},
\label{eq:momentum1}
\end{split}
\end{align}
where $n_k$ is the number of samples of $k$-th class in the current mini-batch, $n$ is the total number of samples of $k$-th class in the current training set $\widetilde{\mathcal{D}}_t$.
With the momentum version $\omega_{k}{'}$, the linear combination of two balancing factors are formulated as follows:
\begin{align}
\pi_k{'} &= m'\psi_{k} + (1-m')\omega_{k}{'}.
\label{eq:momentum2}
\end{align}
Note that, $m'$ is a hyper-parameter used during training, while $m$ mentioned in Eqn.~\eqref{eq12} is only used in initialization. Similarly, we also apply the momentum mechanism on $\pi_k$, and the final form of the dynamic offset term can be written as follows:
\begin{align}
\begin{split}
\hat \pi_k &=\beta \pi_k+(1-\beta)\pi_k{'},
\label{eq:momentum3}
\end{split}
\end{align}
where $\beta$ is a momentum parameter. The final loss with BDR maintains the same form as Eqn.~\eqref{eq:new_ce1} as follows,
\begin{align}
l_{BDR}= l_{ce}(f(x)+ \ln \hat \pi, y).
\label{BDR}
\end{align}

\vspace{0.1cm}
\par{\noindent \bf Discussion.\hspace{3pt}}
The purpose of BDR is to reduce the upper bound of ``maximal destruction" by alleviating the imbalance in gradient contribution among different classes based on Theorem~\ref{thm1}. In addition to quantity imbalance, we also consider variance compensation for the imbalance caused by initialization in incremental learning and manipulate the gradient using softmax saturation effect. Reducing ``maximal destruction" means less forgetting of old knowledge and easier knowledge reconstruction. Note that, BDR is a lightweight plug-and-play module that can be integrated into existing SOTA methods without increasing the number of parameters during inference.

\begin{table*}[t]
\centering
\caption{Average incremental accuracies (\%) of state-of-the-art methods on CIFAR100-B0 and CIFAR100-B50. $B$ and $S$ denote the number of classes learned at initial phase and per incremental phase respectively. The $\star$ symbol means that the published version of~\cite{douillard2022dytox} has the bug of inconsistent sample selection on different gpus, and the reported scores here are corrected by the author on github. The $+$ symbol means the mixup~\cite{zhangmixup} version of~\cite{douillard2022dytox}.  DER w/o P means DER without pruning~\cite{yan2021dynamically}.}
\resizebox{0.97\linewidth}{!}{
\begin{tabular}{c|cc|cc|cc|cc|cc|cc}
\toprule[0.8pt]
\multirow{2}{*}{Method} & \multicolumn{6}{c}{CIFAR100 (B=0)} & \multicolumn{6}{c}{CIFAR100 (B=50)} \\ \cmidrule(r){2-7} \cmidrule(r){8-13} 
&\#p& Avg (S=10) &\#p& Avg (S=5) &\#p& Avg (S=2) &\#p& Avg (S=25) &\#p& Avg (S=10) &\#p& Avg (S=5) \\
\midrule
Bound & 11.22&80.41 & 11.22&81.49 & 11.22&81.74 & 11.22&77.22 & 11.22&79.89  & 11.22&79.91\\ 
\midrule
iCaRL~\cite{rebuffi2017icarl} & 11.22&65.27$_{\pm1.02}$ & 11.22&61.20$_{\pm0.83}$ & 11.22&56.80$_{\pm0.83}$ & 11.22&71.33$_{\pm0.35}$ & 11.22&65.06$_{\pm0.53}$  &  11.22&58.59$_{\pm0.95}$\\ 
UCIR~\cite{hou2019learning} & 11.22&58.66$_{\pm0.71}$ & 11.22&58.17$_{\pm0.30}$ & 11.22&56.86$_{\pm3.74}$ & 11.22&67.21$_{\pm0.35}$ & 11.22&64.28$_{\pm0.19}$  &  11.22&59.92$_{\pm2.4}$\\ 
BiC~\cite{wu2019large} & 11.22&68.80$_{\pm1.20}$ & 11.22&66.48$_{\pm0.32}$ & 11.22&62.09$_{\pm0.85}$ & 11.22&72.47$_{\pm0.99}$ & 11.22&66.62$_{\pm0.45}$  &  11.22&60.25$_{\pm0.34}$\\ 
WA~\cite{zhao2020maintaining} & 11.22&69.46$_{\pm0.29}$ & 11.22&67.33$_{\pm0.15}$ & 11.22&64.32$_{\pm0.28}$ & 11.22&71.43$_{\pm0.65}$ & 11.22&64.01$_{\pm1.62}$  &  11.22&57.86$_{\pm0.81}$\\ 
PODNet~\cite{douillard2020podnet} & 11.22&58.03$_{\pm1.27}$ & 11.22&53.97$_{\pm0.85}$ & 11.22&51.19$_{\pm1.02}$ & 11.22&71.30$_{\pm0.46}$ & 11.22&67.25$_{\pm0.27}$  &  11.22&64.04$_{\pm0.43}$\\ 
RPSNet~\cite{rajasegaran2019adaptive} & 56.5&68.6 & - & - & - & -  &  -& - & -  &  -& - & -  \\ 
DyTox$^\star$~\cite{douillard2022dytox} & 10.73&71.50 & 10.74& 68.86 & 10.77& 64.82 & - & -  &  -& - & -  &  -\\ 
DyTox+$^\star$~\cite{douillard2022dytox} & 10.73&74.10 & 10.74&71.62& 10.77&68.90 & - & -  &  - & - & -  &  -\\
DER w/o P~\cite{yan2021dynamically} & 61.6&75.36$_{\pm0.36}$ & 117.6&74.09$_{\pm0.33}$ & 285.6&72.41$_{\pm0.36}$ & 22.4&74.61$_{\pm0.52}$ & 39.2&73.21$_{\pm0.78}$  &  67.2&72.81$_{\pm0.88}$\\ 
\rowcolor{gray!20}DER~\cite{yan2021dynamically} & 4.96&74.64$_{\pm0.28}$ & 7.21&73.98$_{\pm0.36}$ & 10.15&72.05$_{\pm0.55}$ & 3.90&74.57$_{\pm0.42}$ & 6.13&72.60$_{\pm0.78}$  &  8.79&72.45$_{\pm0.76}$\\ 
\midrule
\rowcolor{gray!20}DER w BDR & 4.96& 75.69$_{\pm0.02}$& 7.21 & 75.20$_{\pm0.06}$ & 10.15& 74.13$_{\pm0.04}$ & 3.90& 75.69$_{\pm0.09}$ & 6.13& 73.77$_{\pm0.21}$ & 8.79& 72.82$_{\pm0.03}$\\ 
\bottomrule[0.8pt]
\end{tabular}}
\label{table1_sota_cifar}
\end{table*}

\begin{table*}[t]\footnotesize
\centering
\caption{Average incremental accuracies (\%) of state-of-the-art methods on Imagenet-Subset and Imagenet. $B$ and $S$ denote the number of classes learned at initial phase and per incremental phase respectively. The $\star$ symbol means that the published version of~\cite{douillard2022dytox} has the bug of inconsistent sample selection on different gpus, and the reported scores here are corrected by the author on github. The $+$ symbol means the mixup~\cite{zhangmixup} version of~\cite{douillard2022dytox}.  DER w/o P means DER without pruning~\cite{yan2021dynamically}. }

\resizebox{0.9\linewidth}{!}{
\begin{tabular}{c|ccccc|ccccc}
\toprule[0.8pt]
\multirow{3}{*}{Method} & \multicolumn{5}{c}{ImageNet-Subset (B=0, S=10)} & \multicolumn{5}{c}{ImageNet (B=0, S=100)} \\ \cmidrule(r){2-6} \cmidrule(r){7-11}
& \multirow{2}{*}{\#p} & \multicolumn{2}{c}{top-1} & \multicolumn{2}{c}{top-5} &  \multirow{2}{*}{\#p} & \multicolumn{2}{c}{top-1} & \multicolumn{2}{c}{top-5}  \\
\cmidrule(r){3-6} \cmidrule(r){8-11} 
& & Avg & Last & Avg & Last & & Avg & Last & Avg & Last \\

\midrule
Bound & 11.00 & - & 79.12 & - & 93.48 & 11.35 &- & 73.58 & -& 90.60  \\ 
\midrule
E2E~\cite{castro2018end} & 11.22 & - & - & 89.92 & 80.29  &11.68 &- & - & 72.09 & 52.29 \\ 
simple-DER~\cite{li2021preserving} & - &-&-&89.92&80.29  & 28.00 & 66.63& 59.24 &85.62 & 80.76 \\
iCaRL~\cite{rebuffi2017icarl} & 11.22 &-&-&83.60&63.80  &11.68&38.40&22.70&63.70&44.00\\
UCIR~\cite{hou2019learning}&- & - & -&-&-     &- & - & -&-&-\\
BiC~\cite{wu2019large} & 11.22 &-&-&90.60&84.40&11.68&-&-&84.00&73.20\\
WA~\cite{zhao2020maintaining} & 11.22 &-&-&91.00&84.10 &11.68&65.67&55.60&86.60&81.10\\
RPSNet~\cite{rajasegaran2019adaptive} &-&-&-&87.90&74.00&- & - & -&-&-\\
DER~\cite{yan2021dynamically} &7.67 & 76.12 & 66.06 & 92.79 & 88.38 & 14.52 &66.73  &58.62 &87.08 & 81.89\\
DER w/o P~\cite{yan2021dynamically} & 61.6 & 77.18 & 66.70 & 93.23 & 87.52  &  61.6 & 68.84 & 60.16& 88.17& 82.86\\
DyTox$^\star$~\cite{douillard2022dytox} & 11.01& 73.96 & 62.20 & 91.29 & 85.60  & 11.36 & - & - &- & -\\
\rowcolor{gray!20}DyTox+$^\star$~\cite{douillard2022dytox} & 11.01& 77.15 & 67.70 & 93.17 & 89.42  & 11.36 & 70.88 & 60.00 & 90.53 & 85.25\\
\midrule
\rowcolor{gray!20}DyTox+ w BDR & 11.01&  79.10 & 70.28 & 93.39 &  90.08 & 11.36 & 72.06 & 61.48 & 91.98 &  86.30 \\ 

\bottomrule[0.8pt]
\end{tabular}}
\label{table1_sota_imagenet}
\end{table*}

\begin{table*}[t]\footnotesize
\centering
\caption{Average incremental accuracies (\%) of state-of-the-art methods with and without BDR. $B$ and $S$ denote the number of classes learned at initial phase and per incremental phase respectively. All results are reproduced using their public code. We repeated each experiment multiple times with different seeds to compute more reliable results.}

\linespread{2.0}
\resizebox{0.98\textwidth}{!}{
\begin{tabular}{c|cccccccc}
\toprule[0.8pt]
\multirow{2}{*}{Method} & \multicolumn{3}{c}{CIFAR100 (B=50)} & \multicolumn{3}{c}{ImageNet-Subset (B=50)} & 
\multicolumn{2}{c}{ImageNet (B=500)}\\ \cmidrule(r){2-4} \cmidrule(r){5-7} \cmidrule(r){8-9}
& Avg (S=10) & Avg (S=5) & Avg (S=2) & Avg (S=10) & Avg (S=5)  & Avg (S=2) & Avg (S=100) & Avg (S=50)\\
\midrule
UCIR~\cite{hou2019learning} & 66.23$_{\pm0.12}$ & 60.68$_{\pm0.21}$ & 53.11$_{\pm0.43}$ & 70.99$_{\pm0.52}$ & 68.61$_{\pm0.30}$ & 63.92$_{\pm0.33}$ & 61.16$_{\pm0.28}$ & 59.44$_{\pm0.26}$\\ 
\rowcolor{gray!20}+BDR & 68.37$_{\pm0.22}$ & 63.56$_{\pm0.30}$ & 55.71$_{\pm0.58}$ & 73.22$_{\pm0.11}$ & 71.78$_{\pm0.13}$ & 67.30$_{\pm0.15}$ & 68.05$_{\pm0.31}$ & 66.72$_{\pm0.24}$\\ 
\midrule
AANet~\cite{liu2021adaptive} & 69.80$_{\pm0.12}$ & 67.85$_{\pm0.14}$ & 64.90$_{\pm0.21}$ & 71.96$_{\pm0.12}$ & 70.05$_{\pm0.63}$ & 67.28$_{\pm0.34}$ & 64.91$_{\pm0.25}$ & 63.58$_{\pm0.36}$\\ 
\rowcolor{gray!20}+BDR & 70.38$_{\pm0.18}$ & 68.06$_{\pm0.21}$ & 65.00$_{\pm0.18}$ & 73.45$_{\pm0.24}$ & 73.10$_{\pm0.40}$ & 69.82$_{\pm1.14}$ & 68.21$_{\pm0.32}$ & 65.80$_{\pm0.23}$\\ 
\midrule
CwD~\cite{shi2022mimicking} & 67.05$_{\pm0.08}$ & 62.45$_{\pm0.32}$ & 57.34$_{\pm0.45}$ & 71.50$_{\pm0.56}$ & 69.43$_{\pm0.35}$ & 65.14$_{\pm0.12}$ & 52.38$_{\pm0.37}$ & 51.41$_{\pm0.43}$\\ 
\rowcolor{gray!20}+BDR & 69.26$_{\pm0.09}$ & 65.40$_{\pm0.38}$ & 59.69$_{\pm0.32}$ & 74.22$_{\pm0.35}$ & 73.12$_{\pm0.24}$ & 69.68$_{\pm0.38}$ & 61.65$_{\pm0.14}$ & 61.06$_{\pm0.21}$\\ 
\midrule
AANet+CwD~\cite{shi2022mimicking} & 70.13$_{\pm0.09}$ & 68.68$_{\pm0.11}$ & 65.74$_{\pm0.48}$ & 72.92$_{\pm0.29}$ & 71.10$_{\pm0.16}$ & 68.18$_{\pm0.27}$ & 54.04$_{\pm0.29}$ & 51.19$_{\pm0.34}$\\ 
\rowcolor{gray!20}+BDR & 70.79$_{\pm0.11}$ & 69.07$_{\pm0.07}$ & 66.01$_{\pm0.34}$ & 76.05$_{\pm0.15}$ & 73.66$_{\pm0.11}$ & 70.91$_{\pm0.66}$ & 60.86$_{\pm0.45}$ & 59.43$_{\pm0.51}$\\ 
\bottomrule[0.8pt]
\end{tabular}}
\label{table1_orthogonality}
\end{table*}

\section{Experiments}

\subsection{Benchmarks and Implementation Details}
\label{4.1}
\par{\noindent \bf Benchmark Datasets.\hspace{3pt}}
We conduct the experiments on three CIL datasets, \textit{i.e.,} CIFAR-100~\cite{krizhevsky2009learning}, ImageNet-Subset~\cite{rebuffi2017icarl} and ImageNet~\cite{deng2009imagenet}, which are widely adopted in early explorations. Specifically,
CIFAR-100 comprises 60,000 32 $\times$ 32 color image samples spread across 100 classes, with each class consisting of 500 training and 100 test samples. 
ImageNet contains approximately 1.3 million 224 $\times$ 224 color image samples spread across 1,000 classes, with approximately 1.2 million images allocated for training and 50,000 for validation purposes.
ImageNet-Subset is a subset of ImageNet that includes 100 classes, with its data randomly sampled from ImageNet using seed 1993, following~\cite{liu2021adaptive,hou2019learning,liu2020mnemonics,shi2022mimicking}.

\begin{figure*}[h]
    \centering
    \includegraphics[width=\linewidth]{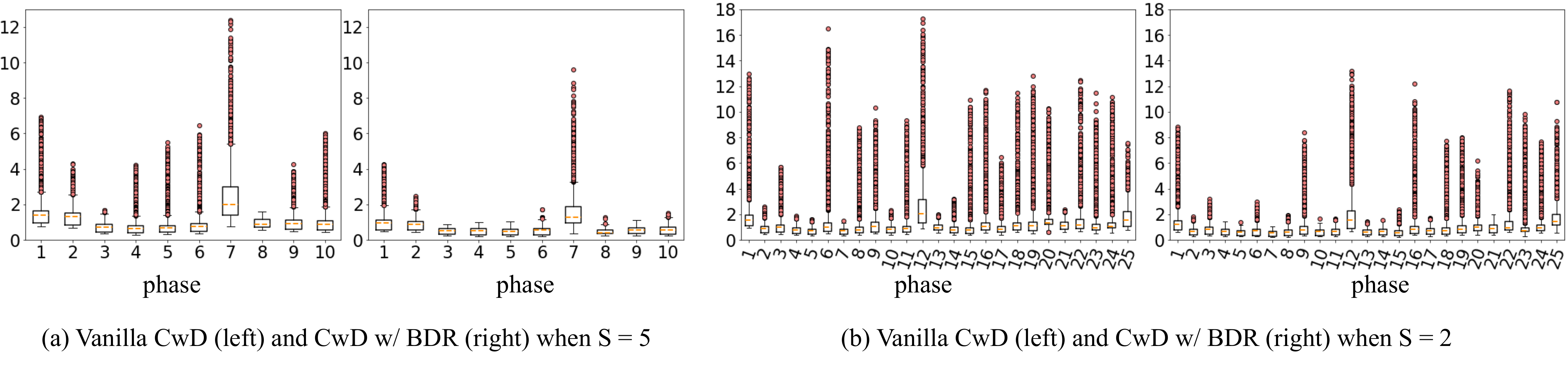}
    \caption{The box-plot of the distillation loss distribution at each phase between the vanilla CwD and CwD with BDR when $S$ = 5 and $S$ = 2. The scattering points outside the quantile range of the median number correspond to the relatively large loss values in the distillation, indicating that the aggressive destruction about the old knowledge during training. 
    Compared with the vanilla CwD, BDR significantly reduces the number of outliers and the value of ``maximal destruction". Besides, BDR has a lower minimum loss, which reflects that the old knowledge after reconstruction is closer to the initialization.}
    \label{fig30}
\end{figure*}

\begin{figure*}[ht]
    \centering
    \includegraphics[width=0.98\linewidth]{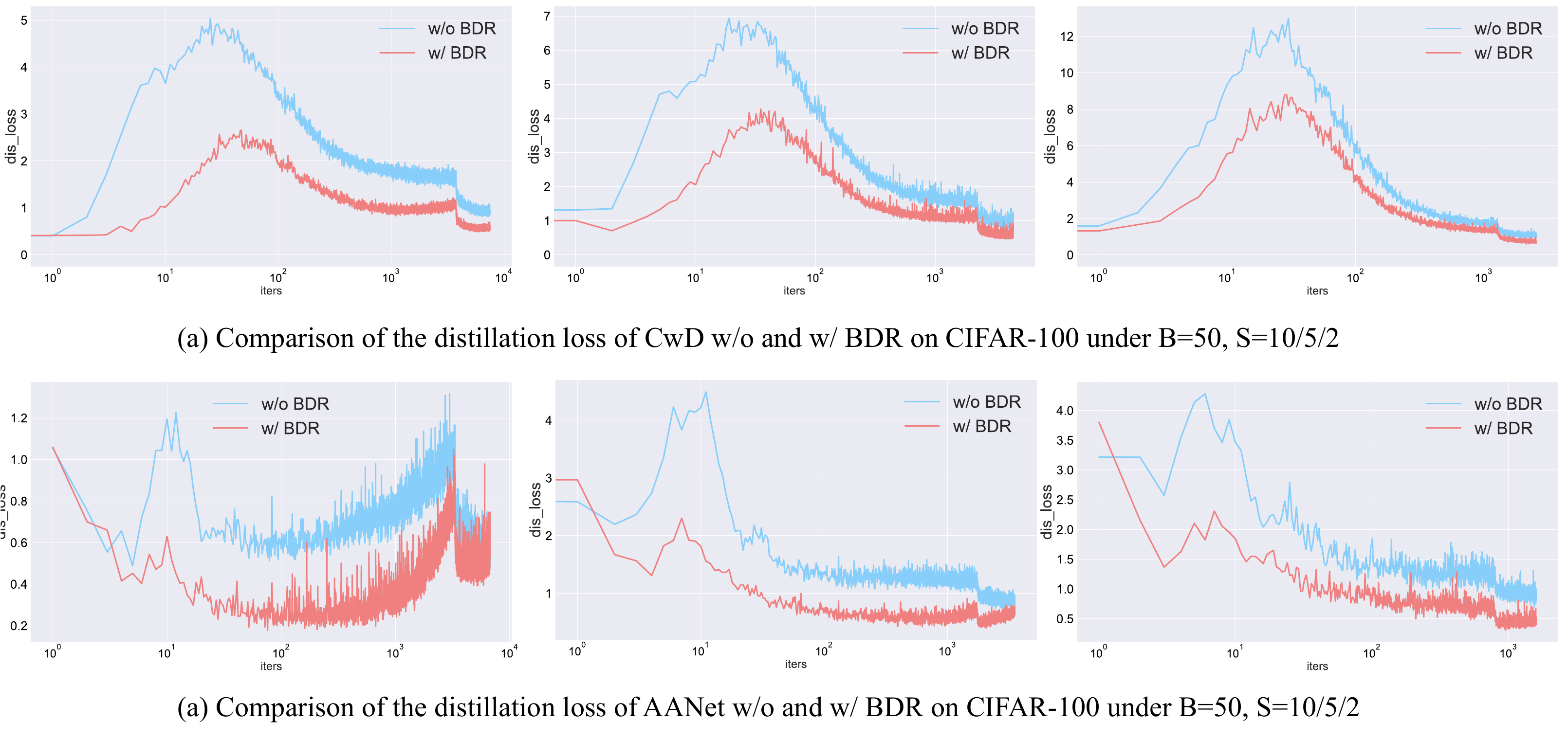}
    \caption{Comparison of the distillation loss at 2-th phase of CwD and AANet w/o and w/ BDR on CIFAR-100 under $B$ = 50, $S$ = 10/5/2. With BDR, the distillation loss will have lower peak values and lower converge values, which means BDR successfully alleviates the degree of maximal destruction and ultimately leads to a higher level of knowledge reconstruction.}
    \label{fig40}
\end{figure*}

\par{\noindent \bf Benchmark Protocols.\hspace{3pt}}
Let $B$ denote the number of classes in the initial phase and $S$ denote the number of classes added in each incremental phase.
For CIFAR-100, we evaluate our methods with protocols: $B$ = 0/50, $S$ = 25/10/5/2. For ImageNet-Subset, we set $B$ = 0/50, $S$ = 10/5/2. For ImageNet, we set  $B$ = 0/500, $S$ = 100/50. 
Specially, when $B$ = 0, we use fixed memory size of 2,000 exemplars in CIFAR-100 and ImageNet-Subset, 20,000 exemplars in ImageNet. When $B$ = 50/500, we use 20 examples as memory per class. These benchmark protocols are consistent with the setting used in the SOTA methods~\cite{yan2021dynamically,douillard2022dytox,shi2022mimicking,liu2021adaptive}.
$\#p$ means the averaging parameter number (by million)  during inference over steps. We report the ``Avg'' accuracy, i.e., average incremental accuracy, which is the average of the accuracies after each phase as defined by~\cite{rebuffi2017icarl}. We also report the final accuracy after the last phase (``Last'').
Note that, if not specified as ``Last'', the defaulted results refer to ``Avg''. We also provide implies that all past data can be accessed during each incremental stage.

\par{\noindent \bf Baselines.\hspace{3pt}}
To show the promise of BDR, we apply it to the following sate-of-the-art baselines:
DER~\cite{yan2021dynamically} and DyTox+~\cite{douillard2022dytox} (the mixup~\cite{zhangmixup} version of DyTox). Other methods including iCaRL~\cite{rebuffi2017icarl}, BiC~\cite{wu2019large}, PODNet~\cite{douillard2020podnet}, WA~\cite{zhao2020maintaining}, UCIR~\cite{hou2019learning} \textit{etc.} are also evaluated for reference. 
In addition, AANet~\cite{liu2021adaptive} and CwD~\cite{shi2022mimicking} which are proposed as the plug-and-play modules in incremental learning, serve as another kind of sate-of-the-art baselines to show that the improvement of our method is orthogonal. For AANet and CwD, we use their UCIR variants~\cite{hou2019learning}. 
It is worth noting that our method does not require special adjustments to the hyper-parameters and original implementation of baselines when plugging in. The default hyper-parameters in BDR are set to $m$ = 0.8, $m'$ = 0.8 and $\beta$ = 0.99. We provide their sensitivity study in Section \ref{4.3}.

\subsection{BDR improves previous SOTA methods}
\label{4.2}
Here, we apply BDR to the two best methods on CIFAR-100 and Imagenet benchmarks to verify that BDR can improve existing SOTA methods. Specifically, for CIFAR-100, DER performed best, and for Imagenet, DyTox performed best. 
As reported in~\cite{douillard2022dytox}, for CIFAR-100, we focus on its ``Avg'' of top-1, while for Imagenet benchmarks, we additional report the performance of ``Last'' and top-5. Since both DyTox and BDR changed the number of parameters, we also marked the comparison of the number of parameters of different methods in Table~\ref{table1_sota_cifar} and Table~\ref{table1_sota_imagenet}.
From Table~\ref{table1_sota_cifar}, we can observe that BDR can improve the average incremental accuracy of DER by about 0.5\% to 2\% on CIFAR-100 and achieve the best results, showing the effectiveness of BDR. Interestingly, the improvement in BDR on DER is even greater than ``DER w/o P", but with far fewer inference parameters.
In addition, we also compare the methods on the Imagenet-subset and Imagenet benchmark in Table~\ref{table1_sota_imagenet}. According to the results in Table~\ref{table1_sota_imagenet}, BDR improves the performance of DyTox+ (the mixup version of DyTox), indicating that BDR is applicable to transformer architectures and that the source of gains is orthogonal to mixup data augmentation and balanced fine-tuning which widely used in incremental learning methods for relieving the imbalance problem.

To further demonstrate the compatibility and orthogonality of BDR, we apply it to other popular memory-replay CIL methods like UCIR~\cite{hou2019learning}, AANet~\cite{liu2021adaptive} and CwD~\cite{shi2022mimicking}, as shown in Table~\ref{table1_orthogonality}. The results show that BDR can improve the average incremental accuracy by approximately 0.5\% to 3\% on CIFAR-100, about 1.5\% to 4.5\% on ImageNet-Subset, and around 2\% to 9.5\% on ImageNet.
Notably, the improvement of BDR on CwD and UCIR is greater than that on AANet and ``AANet+CwD", which may be due to the fact that AANet introduces more learnable parameters, and thus provides a larger search space for plasticity and stability~\cite{liu2021adaptive}. Nevertheless, BDR still achieves significant improvements over AANet on high-resolution datasets (ImageNet-Subset and ImageNet).
On ImageNet, BDR shows a substantial improvement, where BDR shows about 9.5\% performance gain over CwD~\cite{shi2022mimicking}. These results demonstrate the generality of BDR and highlight the importance of analyzing the ``destruction-reconstruction" dynamics, which is widely overlooked by previous methods.

\subsection{The impact of BDR on training}
To understand the working process of BDR, we have plotted the distribution of $L_{old}$ during incremental training for each phase at $S$ = 5/2 in Figure~\ref{fig30}. Note that, for CwD, $L_{old}$ is the distillation loss. Each box in Figure~\ref{fig30} corresponds to the range of $L_{old}$ during an incremental phase, and a higher peak in a box indicates a greater degree of ``maximal destruction" during the training process.
From Figure~\ref{fig30}, we can see that BDR significantly mitigates the degree of ``maximal destruction" according to the reduction of the maximal loss and the outliers. Moreover, the minimum value after convergence is also lowered, which indicates that BDR better reconstructs the old knowledge after mitigating the ``maximal destruction".

Figure~\ref{fig40} presents a more intuitive visualization of the optimization procedure \textit{w.r.t.} $L_{old}$. Although the ``destruction-reconstruction" dynamic persists, BDR significantly reduces the degree of ``maximal destruction," as evidenced by the visibly lower peaks in the loss curve. Additionally, using BDR results in a lower convergence value of $L_{old}$, indicating that BDR promotes more lossless knowledge reconstruction.

\subsection{Ablation study}

\label{4.3}
In this section, we conduct an ablation study to evaluate the influence of each component in BDR, and verify the sensitivity of the corresponding hyper-parameters respectively.

\begin{table}[t]\footnotesize
\caption{Ablations of the different components in BDR, i.e., Constant Rebalancing (CR), Dynamic Compensation (DC.) and Momentum. The experiments are run by ``AANet+CwD'' under $B$ = 50, $S$ = 10 and DER under $B$ = 0, $S$ = 2 on ImageNet-Subset. 
}

\centering
\resizebox{0.99\linewidth}{!}{
\begin{tabular}{c|ccc|c}
\toprule
Method & CR & DC. & Momentum & Avg \\
\midrule
\multirow4{*}{AANet+CwD with}   & & & & 72.92 \\
& $\surd$ & & & 71.68 \\
& $\surd$& $\surd$ & & 75.08 \\
& $\surd$& $\surd$& $\surd$ & 76.05 \\
\midrule
\multirow4{*}{DER with }   & & & &  72.05 \\
& $\surd$ & & &  72.76\\
& $\surd$& $\surd$ & & 73.52 \\
& $\surd$& $\surd$& $\surd$ &  74.13\\
\bottomrule
\end{tabular}}
\label{table_ablation}
\end{table}

\par{\noindent \bf Ablation on the component of BDR.\hspace{3pt}}
As shown in Table~\ref{table_ablation}, we chosen  ``AANet+CwD'' and DER as the base methods and assessed the significance of different components. 
First, according to the performance of ``AANet+CwD" with constant rebalancing (CR), it means that CR does not always lead to the positive improvement, even incurs a decrease of 1.24\% in the average incremental accuracy.
The explanation is attributed to that the constant rebalancing only accounts for quantity imbalance, while ignoring the ``destruction-reconstruction" dynamics in incremental learning.
In comparison, when Dynamic Compensation (DC.) and Momentum were incorporated, accuracy improved significantly, indicating the effectiveness of these component and mechanism and highlighting the potential of BDR.
Specifically, ``CR + DC.'' improved the accuracy of ``AANet+CwD" and DER by 2.16\% and 1.47\%, respectively, while ``CR + DC. + Momentum' improved the accuracy of ``AANet+CwD" and DER by 3.13\% and 2.08\%, respectively.

\par{\noindent \bf Ablation on the sensitivity of hyper-parameters.\hspace{3pt}}
In this section, we explore the impact of various hyper-parameters, including $m$, $m'$ and $\beta$.
As illustrated in Figure~\ref{fig3}, we can see that the better performance is achieved when $m'$ are not equal to be 1, indicating that the dynamic compensation term really plays an indispensable role. Besides, performance also fluctuates as $m'$ changes, which is because existing CIL methods have different trade-offs in plasticity and stability, and components in AANet+CwD may have partially similar spirit. 
In contrast, $m$ has less influence on the results than $m'$, which also confirms the importance of the dynamic compensation component. $m$ and $m'$ can be seen as the interface that adjusts trade-off between stability and plasticity in different methods.
For the momentum parameter $\beta$, an empirical setting of 0.99 may be suitable, as indicated by the results in Table~\ref{table_beta}.

\begin{figure}[t]
    \centering
    \includegraphics[width=0.90\linewidth]{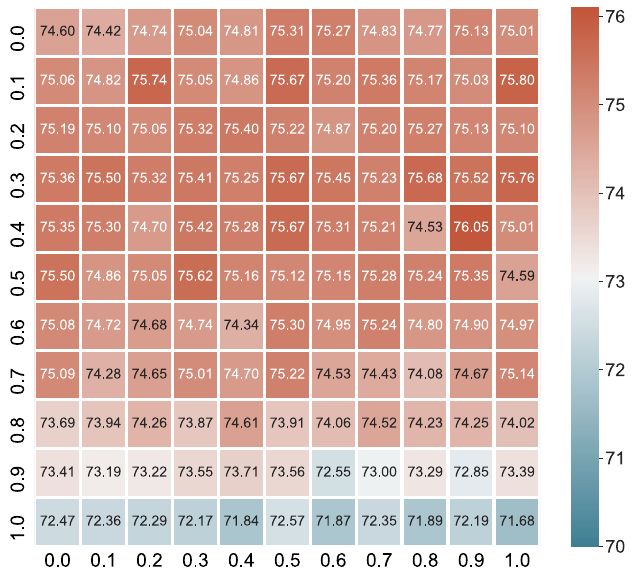}
    \caption{Impact of hyper-parameters $m$ (x-axis) and ${m}'$ (y-axis). The experimental results are obtained on  ImageNet-Subset based on ``AANet+CwD'' with BDR under $B$ = 50, $S$ = 10.}
    \label{fig3}

\end{figure}

\begin{table*}[h!]\small
\centering
\caption{Average incremental accuracies (\%) of state-of-the-art methods with and without BDR. $B$ and $S$ denote the number of classes learned at initial phase and per incremental phase respectively. $R$ means the number of samples saved per class. We repeated each experiment multiple times with different seeds to compute more reliable results.}

\linespread{2.0}
\resizebox{0.95\textwidth}{!}{
\begin{tabular}{c|cccccc}
\toprule[0.8pt]
\multirow{2}{*}{Method} & \multicolumn{3}{c}{CIFAR100 (B=50, R=5)} & \multicolumn{3}{c}{CIFAR100 (B=50, R=40)} \\ \cmidrule(r){2-4} \cmidrule(r){5-7} 
& Avg (S=10) & Avg (S=5) & Avg (S=2) & Avg (S=10) & Avg (S=5) & Avg (S=2) \\
\midrule
UCIR~\cite{hou2019learning} & 55.85$_{\pm0.13}$ & 52.74$_{\pm0.34}$ & 39.54$_{\pm2.82}$ & 68.80$_{\pm0.16}$ & 64.15$_{\pm0.33}$ & 56.88$_{\pm0.12}$\\ 
\rowcolor{gray!20}+BDR & 61.40$_{\pm0.24}$ & 58.98$_{\pm0.37}$ & 46.46$_{\pm0.46}$  & 69.74$_{\pm0.20}$ & 65.95$_{\pm0.10}$ & 57.64$_{\pm0.25}$\\ 
\midrule
AANet~\cite{liu2021adaptive} & 66.07$_{\pm0.06}$ & 62.92$_{\pm0.12}$ & 55.58$_{\pm0.34}$  & 71.68$_{\pm0.30}$ & 70.48$_{\pm0.15}$ & 66.78$_{\pm0.93}$\\ 
\rowcolor{gray!20}+BDR & 67.64$_{\pm0.41}$ & 64.37$_{\pm0.08}$ & 58.17$_{\pm0.30}$  & 71.15$_{\pm0.26}$ & 69.56$_{\pm0.30}$ & 67.02$_{\pm0.07}$\\ 
\midrule
CwD~\cite{shi2022mimicking} & 56.32$_{\pm0.07}$ & 53.87$_{\pm0.09}$ & 39.65$_{\pm2.80}$  & 69.52$_{\pm0.21}$ & 66.63$_{\pm0.12}$ & 59.80$_{\pm0.22}$\\ 
\rowcolor{gray!20}+BDR & 63.33$_{\pm0.38}$ & 60.39$_{\pm0.27}$ & 48.82$_{\pm0.47}$  & 70.53$_{\pm0.05}$ & 68.11$_{\pm0.06}$ & 61.37$_{\pm0.33}$\\ 
\midrule
AANet+CwD~\cite{shi2022mimicking} & 66.05$_{\pm0.12}$ & 63.43$_{\pm0.13}$ & 56.87$_{\pm0.80}$  & 71.62$_{\pm0.29}$ & 70.61$_{\pm0.18}$ & 68.73$_{\pm0.25}$\\ 
\rowcolor{gray!20}+BDR & 67.96$_{\pm0.20}$ & 64.55$_{\pm0.19}$ & 58.35$_{\pm0.62}$  & 71.27$_{\pm0.19}$ & 70.03$_{\pm0.04}$ & 68.23$_{\pm0.26}$ \\ 
\bottomrule[0.8pt]
\end{tabular}}
\label{table1_sota}
\end{table*}

\begin{table}[t]\footnotesize
\caption{Ablation study on the impact of the momentum parameter ($\beta$). These experimental results are obtained on CIFAR-100 based on CwD under $B$ = 50, $S$ = 10 and on Imagenet-Subset based on ``AANet+CwD'' under $B$ = 50, $S$ = 10 respectively. }
\centering
\resizebox{0.95\linewidth}{!}{
\begin{tabular}{l|c|c|c|c}
\toprule
$\beta$ & 0.0 & 0.3 & 0.5 & 0.7   \\
\midrule
CwD & 68.87 & 69.10 & 69.14 & 68.95 \\
AANet+CwD & 75.08 & 75.19 & 74.99 &74.95\\
\midrule
\midrule
$\beta$ & 0.9 & 0.99 & 0.999 & 1.0  \\
\midrule
CwD & 68.91 & 69.24 & 69.09 & 68.90\\
AANet+CwD &  75.17 & 75.54 & 75.36 & 73.60\\
\bottomrule
\end{tabular}}
\label{table_beta}
\end{table}

\begin{figure}[t]
    \includegraphics[width=0.98\linewidth]{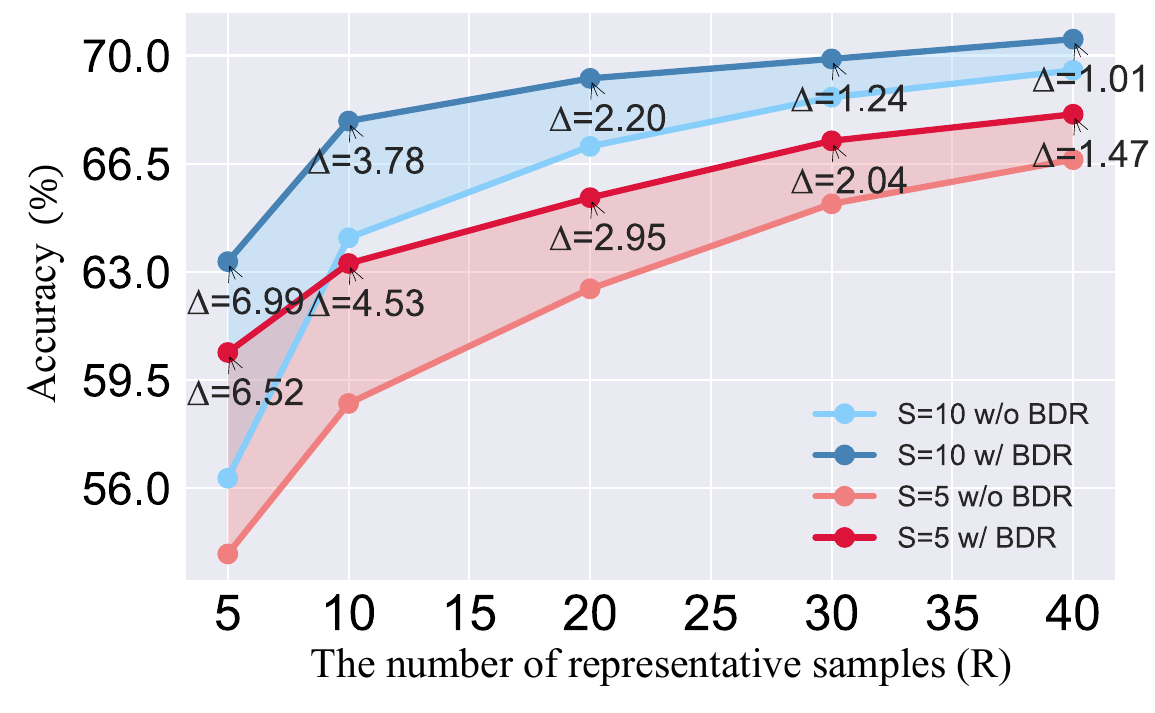}
    \caption{Ablation study on the impact of the number of representative samples ($R$). These experimental results are obtained by changing $R$ on CIFAR-100 under $B$ = 50, $S$ = 10/5.
    }
    \label{table_samplenumber}
    \vspace{-0.15cm}
\end{figure}
\par{\noindent \bf Ablation on the number of representative samples.\hspace{3pt}}
The number of representative samples stored in the memory has a direct impact on the degree of gradient imbalance between new and old classes. To show the robustness of our approach, we varied the number of samples saved per class ($R$) and conducted CwD~\cite{shi2022mimicking} with and without BDR. The results, presented in Figure~\ref{table_samplenumber}, show that BDR significantly improves performance when the number of representative samples is small.
Specially, for $R$ = 5, the average incremental accuracy shows a gain of about 6.99\% and 6.52\% under $S$ = 10 and $S$ = 5, respectively.
Even when more samples are saved (e.g., $R$ = 40), BDR still maintains an improvement of around 1\% to 1.5\%.
These findings suggest that BDR is particularly effective when representative samples are limited.

\par{\noindent \bf Ablation on the compatibility of BDR for more methods.\hspace{3pt}}
To further show the generalizability of our method, we applied BDR to more memory-replay CIL methods like UCIR~\cite{hou2019learning}, AANet~\cite{liu2021adaptive},  CwD~\cite{shi2022mimicking} and ``AANet+CwD''~\cite{shi2022mimicking}, and conducted experiments under $R$ = 5 and $R$ = 40. 
As shown in Table \ref{table1_sota}, BDR achieves a more significant improvement under $R$ = 5 compared to $R$ = 40.
Specifically, the average incremental accuracy shows about 1\% to 9\% and 0.5\% to 1.5\% gain under $R$ = 5 and $R$ = 40 respectively. 
These findings suggest that BDR is effective in the general memory-replay CIL methods but will be more useful in the challenging environments with smaller memory and under more severe imbalance settings.

\subsection{Compare with other class-imbalance methods.}
Some readers may concern the advantages of BDR compared to previous classical class imbalance methods like PaCo~\cite{cui2021parametric}, LA~\cite{menon2020long}, TADE~\cite{zhangself}, BCL~\cite{zhou2022contrastive} and two straightforward baselines \textit{i.e.,} reweighting and balanced sampling. We specially compare them with our proposed BDR in Table~\ref{table3} to demonstrate that our module is more suitable to memory-replay CIL scenarios. It's worth noting that these baselines were originally proposed for static training scenarios, and to ensure fairness, we transplant them to memory-replay class incremental learning scenarios without heavy modification.
As shown in Table~\ref{table3}, except for LA~\cite{menon2020long}, the other methods do not show any performance improvements. This suggests that it's crucial to consider the differences for imbalance learning when facing incremental learning scenarios and static training scenarios. In contrast, BDR is more suitable for incremental learning scenarios and ultimately achieves the best results.

Besides, we should note that there are indeed some explorations about class imbalance in existing incremental learning algorithms. For example, BiC~\cite{wu2019large} introduces bias parameters to correct prediction scores and fine-tunes these parameters using a balanced subset drawn from the training set. UCIR~\cite{hou2019learning} introduces regularization terms, namely cosine normalization, to mitigate the effects of data imbalance. WA~\cite{zhao2020maintaining} alleviates the imbalance by aligning the weight norms of classifiers.
We summarize these methods into two categories: balancing classifier weights and fine-tuning on balanced subsets. 
Despite their potential, these methods failed to analyze how imbalance can lead to catastrophic forgetting during incremental training. Moreover, as demonstrated in Table~\ref{table1_sota_cifar} and Table~\ref{table1_sota_imagenet}, these methods significantly underperform over SOTA methods.

\begin{table}[t]\footnotesize
\caption{Compare with other class-imbalance methods under CIL settings. These experimental results are based on CwD on CIFAR100. $B$ and $S$ denote the number of classes learned at initial phase and per incremental phase respectively.}
\centering
\resizebox{0.99\linewidth}{!}{
\begin{tabular}{c|cccc}
\toprule
\multirow{2}{*}{Method}  & \multicolumn{2}{c}{CIFAR100 (B=50,S=10)} & \multicolumn{2}{c}{CIFAR100 (B=50,S=5)}  \\
\cmidrule(r){2-3} \cmidrule(r){4-5} 
& Avg & Last & Avg & Last  \\
\midrule
CwD~\cite{shi2022mimicking}   & 67.05 & 58.20 & 62.45 & 51.60 \\
w PaCo~\cite{cui2021parametric}   & 54.98 & 40.30 & 50.63 & 45.70 \\
w BCL~\cite{zhou2022contrastive}   & 51.70 & 46.60 & 50.44 & 44.90 \\
w TADE~\cite{zhangself}   & 58.47& 52.50 & 50.33 & 43.80\\
w LA~\cite{menon2020long} & 67.53 & 58.70 & 63.14 & 53.20 \\
w reweighting  & 65.08 & 55.13 & 62.19 & 51.20 \\
w bal. sampling   & 22.21 & 9.15 & 14.65 & 6.53 \\

\midrule
\rowcolor{gray!20}w BDR   & 69.26 & 60.90& 65.40 & 54.30\\
\bottomrule
\end{tabular}}
\label{table3}
\end{table}

\subsection{Limits of only use Constant Rebalancing (CR).}
As discussion in Section~\ref{cr_limit}, CR may lead to the over-correction and hinder the learning of new classes despite mitigating forgetting of old knowledge. 
Conversely, BDR strikes a better balance between the new classes and the old classes, which maintains similar accuracy on old classes as CR, while achieves higher accuracy on new classes, as demonstrated in Table~\ref{table_limit}. In addition, BDR offers some potential flexibility by allowing us to manipulate the values of $m, m'$ and achieve a higher level of stability and plasticity. 
In summary, BDR inherits the advantages of CR, and simultaneously avoids its drawbacks, making the proposed design serve as a promising enhancement for memory-replay class incremental learning.


\begin{figure*}[t]
    \centering
    \includegraphics[width=0.97\linewidth]{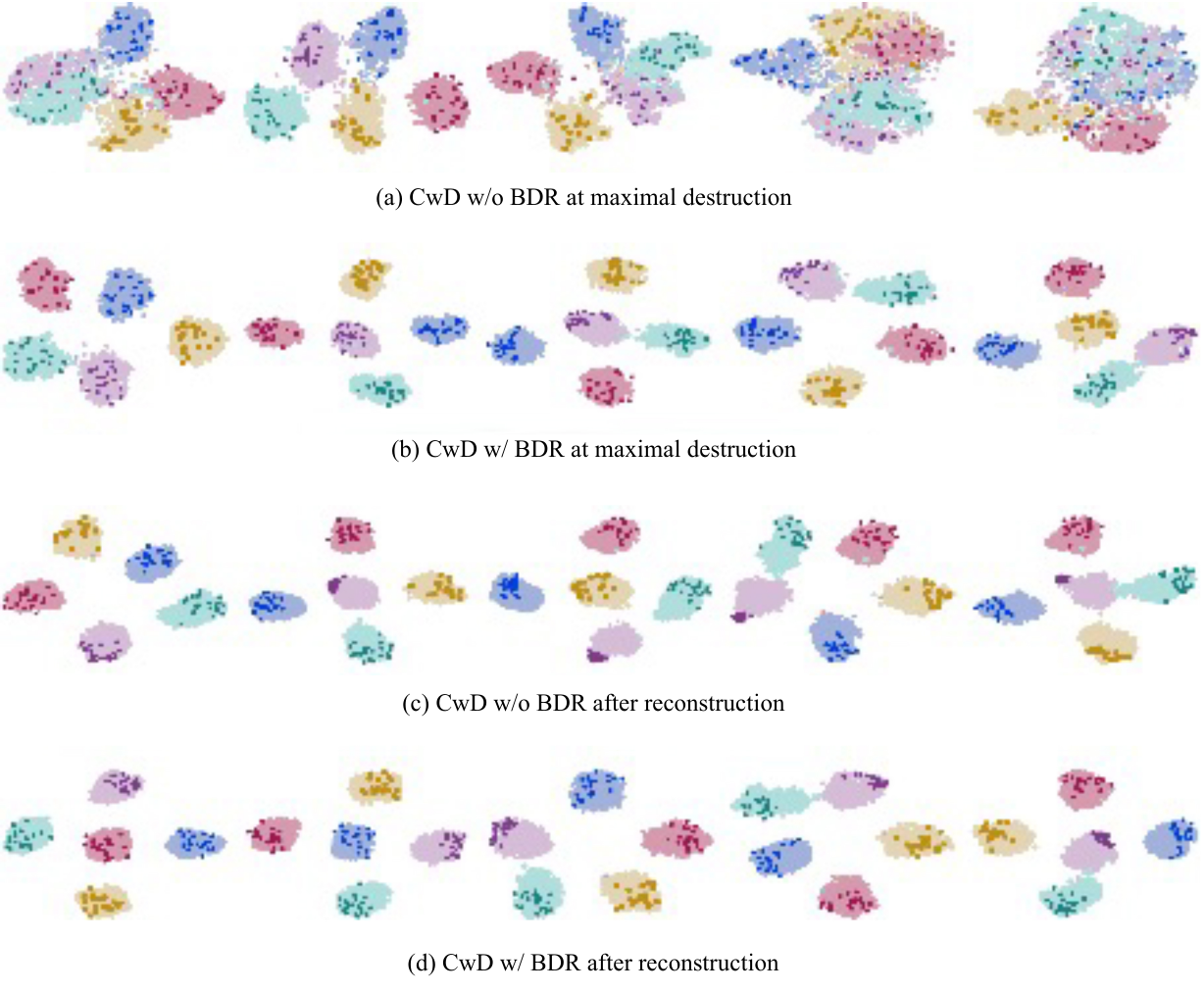}
    \caption{Comparison of t-SNE~\cite{van2008visualizing} about the distribution of old classes on CIFAR-100 ($B$ = 50, $S$ = 10) based on CwD. The training process comprises five incremental phases, each corresponding to a column in the picture.}
    \label{fig6}
\end{figure*}

\begin{table}[t]
\caption{The initial loss, convergence loss and final accuracy of old classes and new classes with Constant Rebalancing (CR) and with BDR on CIFAR-100 based on CwD.}
\centering
\resizebox{0.99\linewidth}{!}{
\begin{tabular}{c|c|ccc}
\toprule
Method & Task & Initial & Convergence & Final Acc  \\
\midrule
\multirow2{*}{CwD}  & $Old$ & 0.9018 & 0.00001 & 67.2\% \\
& $New$ & 10.5097 & 0.0851 & 82.1\% \\
\midrule
\multirow2{*}{CwD w CR}  & $Old$ & 0.9141 & 0.0004 & 75.3\%\\
& $New$ & 7.6158 & 0.5732 & 57.1\% \\
\midrule
\multirow2{*}{CwD w BDR}  & $Old$ & 0.7658 & 0.0003 & 74.4\% \\
& $New$ & 7.9685 & 0.3865 & 61.7\%\\
\bottomrule
\end{tabular}}
\vspace{-0.6cm}
\label{table_limit}
\end{table}

\par{\noindent \bf Visualization results.\hspace{3pt}} 
In Figure~\ref{fig6}, we plot the t-SNE~\cite{van2008visualizing} of CwD~\cite{shi2022mimicking} without and with BDR at the five training phases. 
It can be seen that when the ``maximal destruction'' reaches to, CwD with BDR has better discrimination ability on old classes than the vanilla CwD.
After reconstruction, CwD with BDR has a better reconstruction quality, which is manifested by the smaller distribution shift between foreground points and background points. In other words, the BDR succeeded in preserving old knowledge by reducing the ``maximal destruction", ultimately leading to a higher level of ``reconstruction".


\section{Conclusion}
In this work, we focus on the popular memory-replay class incremental learning paradigm and identify the inherent issue of the destruction-reconstruction dynamics that are widely existed to affect the catastrophic forgetting. Specially we find in this dynamic,
the more severe the destruction is, the more difficult it is to reconstruct old knowledge. However,
our theoretical analysis shows that the maximal destruction can be actually reduced by manipulating the contribution of samples from the current phase and samples from the memory. In this spirit, we propose a BDR module to alleviate the degree of maximal destruction by dynamically balancing the gradient of different classes and ultimately lead to a higher level of knowledge reconstruction. Extensive experiments on the state-of-the-art Class incremental learning methods and multiple datasets demonstrate its effectiveness. 
In the future, we will further explore solutions to the destruction-reconstruction dynamic to better mitigate catastrophic forgetting in CIL, especially in combination with the popular pretraining models.

\newpage

\bibliographystyle{IEEEtran}
\bibliography{egbib}

\begin{thebibliography}{10}
\providecommand{\url}[1]{#1}
\csname url@samestyle\endcsname
\providecommand{\newblock}{\relax}
\providecommand{\bibinfo}[2]{#2}
\providecommand{\BIBentrySTDinterwordspacing}{\spaceskip=0pt\relax}
\providecommand{\BIBentryALTinterwordstretchfactor}{4}
\providecommand{\BIBentryALTinterwordspacing}{\spaceskip=\fontdimen2\font plus
\BIBentryALTinterwordstretchfactor\fontdimen3\font minus
  \fontdimen4\font\relax}
\providecommand{\BIBforeignlanguage}[2]{{%
\expandafter\ifx\csname l@#1\endcsname\relax
\typeout{** WARNING: IEEEtran.bst: No hyphenation pattern has been}%
\typeout{** loaded for the language `#1'. Using the pattern for}%
\typeout{** the default language instead.}%
\else
\language=\csname l@#1\endcsname
\fi
#2}}
\providecommand{\BIBdecl}{\relax}
\BIBdecl

\bibitem{he2016deep}
K.~He, X.~Zhang, S.~Ren, and J.~Sun, ``Deep residual learning for image
  recognition,'' in \emph{Proceedings of the IEEE conference on computer vision
  and pattern recognition}, 2016, pp. 770--778.

\bibitem{long2015fully}
J.~Long, E.~Shelhamer, and T.~Darrell, ``Fully convolutional networks for
  semantic segmentation,'' in \emph{Proceedings of the IEEE conference on
  computer vision and pattern recognition}, 2015, pp. 3431--3440.

\bibitem{ren2015faster}
S.~Ren, K.~He, R.~Girshick, and J.~Sun, ``Faster r-cnn: Towards real-time
  object detection with region proposal networks,'' \emph{Advances in neural
  information processing systems}, vol.~28, 2015.

\bibitem{krizhevsky2017imagenet}
A.~Krizhevsky, I.~Sutskever, and G.~E. Hinton, ``Imagenet classification with
  deep convolutional neural networks,'' \emph{Communications of the ACM},
  vol.~60, no.~6, pp. 84--90, 2017.

\bibitem{castro2018end}
F.~M. Castro, M.~J. Mar{\'\i}n-Jim{\'e}nez, N.~Guil, C.~Schmid, and K.~Alahari,
  ``End-to-end incremental learning,'' in \emph{Proceedings of the European
  conference on computer vision (ECCV)}, 2018, pp. 233--248.

\bibitem{rebuffi2017icarl}
S.-A. Rebuffi, A.~Kolesnikov, G.~Sperl, and C.~H. Lampert, ``icarl: Incremental
  classifier and representation learning,'' in \emph{Proceedings of the IEEE
  conference on Computer Vision and Pattern Recognition}, 2017, pp. 2001--2010.

\bibitem{shi2022mimicking}
Y.~Shi, K.~Zhou, J.~Liang, Z.~Jiang, J.~Feng, P.~H. Torr, S.~Bai, and V.~Y.
  Tan, ``Mimicking the oracle: An initial phase decorrelation approach for
  class incremental learning,'' in \emph{Proceedings of the IEEE/CVF Conference
  on Computer Vision and Pattern Recognition}, 2022, pp. 16\,722--16\,731.

\bibitem{goodfellow2013empirical}
I.~J. Goodfellow, M.~Mirza, D.~Xiao, A.~Courville, and Y.~Bengio, ``An
  empirical investigation of catastrophic forgetting in gradient-based neural
  networks,'' \emph{arXiv preprint arXiv:1312.6211}, 2013.

\bibitem{kirkpatrick2017overcoming}
J.~Kirkpatrick, R.~Pascanu, N.~Rabinowitz, J.~Veness, G.~Desjardins, A.~A.
  Rusu, K.~Milan, J.~Quan, T.~Ramalho, A.~Grabska-Barwinska \emph{et~al.},
  ``Overcoming catastrophic forgetting in neural networks,'' \emph{Proceedings
  of the national academy of sciences}, vol. 114, no.~13, pp. 3521--3526, 2017.

\bibitem{mccloskey1989catastrophic}
M.~McCloskey and N.~J. Cohen, ``Catastrophic interference in connectionist
  networks: The sequential learning problem,'' in \emph{Psychology of learning
  and motivation}.\hskip 1em plus 0.5em minus 0.4em\relax Elsevier, 1989,
  vol.~24, pp. 109--165.

\bibitem{ahn2021ss}
H.~Ahn, J.~Kwak, S.~Lim, H.~Bang, H.~Kim, and T.~Moon, ``Ss-il: Separated
  softmax for incremental learning,'' in \emph{Proceedings of the IEEE/CVF
  International Conference on Computer Vision}, 2021, pp. 844--853.

\bibitem{hou2019learning}
S.~Hou, X.~Pan, C.~C. Loy, Z.~Wang, and D.~Lin, ``Learning a unified classifier
  incrementally via rebalancing,'' in \emph{Proceedings of the IEEE/CVF
  Conference on Computer Vision and Pattern Recognition}, 2019, pp. 831--839.

\bibitem{hu2021distilling}
X.~Hu, K.~Tang, C.~Miao, X.-S. Hua, and H.~Zhang, ``Distilling causal effect of
  data in class-incremental learning,'' in \emph{Proceedings of the IEEE/CVF
  Conference on Computer Vision and Pattern Recognition}, 2021, pp. 3957--3966.

\bibitem{liu2021adaptive}
Y.~Liu, B.~Schiele, and Q.~Sun, ``Adaptive aggregation networks for
  class-incremental learning,'' in \emph{Proceedings of the IEEE/CVF Conference
  on Computer Vision and Pattern Recognition}, 2021, pp. 2544--2553.

\bibitem{liu2020mnemonics}
Y.~Liu, Y.~Su, A.-A. Liu, B.~Schiele, and Q.~Sun, ``Mnemonics training:
  Multi-class incremental learning without forgetting,'' in \emph{Proceedings
  of the IEEE/CVF conference on Computer Vision and Pattern Recognition}, 2020,
  pp. 12\,245--12\,254.

\bibitem{tao2020topology}
X.~Tao, X.~Chang, X.~Hong, X.~Wei, and Y.~Gong, ``Topology-preserving
  class-incremental learning,'' in \emph{European Conference on Computer
  Vision}.\hskip 1em plus 0.5em minus 0.4em\relax Springer, 2020, pp. 254--270.

\bibitem{zhu2021class}
F.~Zhu, Z.~Cheng, X.-Y. Zhang, and C.-l. Liu, ``Class-incremental learning via
  dual augmentation,'' \emph{Advances in Neural Information Processing
  Systems}, vol.~34, pp. 14\,306--14\,318, 2021.

\bibitem{chaudhry2018riemannian}
A.~Chaudhry, P.~K. Dokania, T.~Ajanthan, and P.~H. Torr, ``Riemannian walk for
  incremental learning: Understanding forgetting and intransigence,'' in
  \emph{Proceedings of the European Conference on Computer Vision (ECCV)},
  2018, pp. 532--547.

\bibitem{zenke2017continual}
F.~Zenke, B.~Poole, and S.~Ganguli, ``Continual learning through synaptic
  intelligence,'' in \emph{International Conference on Machine Learning}.\hskip
  1em plus 0.5em minus 0.4em\relax PMLR, 2017, pp. 3987--3995.

\bibitem{aljundi2018memory}
R.~Aljundi, F.~Babiloni, M.~Elhoseiny, M.~Rohrbach, and T.~Tuytelaars, ``Memory
  aware synapses: Learning what (not) to forget,'' in \emph{Proceedings of the
  European Conference on Computer Vision (ECCV)}, 2018, pp. 139--154.

\bibitem{hsu2018re}
Y.-C. Hsu, Y.-C. Liu, A.~Ramasamy, and Z.~Kira, ``Re-evaluating continual
  learning scenarios: A categorization and case for strong baselines,''
  \emph{arXiv preprint arXiv:1810.12488}, 2018.

\bibitem{van2019three}
G.~M. Van~de Ven and A.~S. Tolias, ``Three scenarios for continual learning,''
  \emph{arXiv preprint arXiv:1904.07734}, 2019.

\bibitem{fernando2017pathnet}
C.~Fernando, D.~Banarse, C.~Blundell, Y.~Zwols, D.~Ha, A.~A. Rusu, A.~Pritzel,
  and D.~Wierstra, ``Pathnet: Evolution channels gradient descent in super
  neural networks,'' \emph{arXiv preprint arXiv:1701.08734}, 2017.

\bibitem{mallya2018packnet}
A.~Mallya and S.~Lazebnik, ``Packnet: Adding multiple tasks to a single network
  by iterative pruning,'' in \emph{Proceedings of the IEEE conference on
  Computer Vision and Pattern Recognition}, 2018, pp. 7765--7773.

\bibitem{mallya2018piggyback}
A.~Mallya, D.~Davis, and S.~Lazebnik, ``Piggyback: Adapting a single network to
  multiple tasks by learning to mask weights,'' in \emph{Proceedings of the
  European Conference on Computer Vision (ECCV)}, 2018, pp. 67--82.

\bibitem{serra2018overcoming}
J.~Serra, D.~Suris, M.~Miron, and A.~Karatzoglou, ``Overcoming catastrophic
  forgetting with hard attention to the task,'' in \emph{International
  Conference on Machine Learning}.\hskip 1em plus 0.5em minus 0.4em\relax PMLR,
  2018, pp. 4548--4557.

\bibitem{wu2019large}
Y.~Wu, Y.~Chen, L.~Wang, Y.~Ye, Z.~Liu, Y.~Guo, and Y.~Fu, ``Large scale
  incremental learning,'' in \emph{Proceedings of the IEEE/CVF Conference on
  Computer Vision and Pattern Recognition}, 2019, pp. 374--382.

\bibitem{belouadah2019il2m}
E.~Belouadah and A.~Popescu, ``Il2m: Class incremental learning with dual
  memory,'' in \emph{Proceedings of the IEEE/CVF International Conference on
  Computer Vision}, 2019, pp. 583--592.

\bibitem{wang2021knowledge}
L.~Wang and K.-J. Yoon, ``Knowledge distillation and student-teacher learning
  for visual intelligence: A review and new outlooks,'' \emph{IEEE Transactions
  on Pattern Analysis and Machine Intelligence}, 2021.

\bibitem{hinton2015distilling}
G.~Hinton, O.~Vinyals, J.~Dean \emph{et~al.}, ``Distilling the knowledge in a
  neural network,'' \emph{arXiv preprint arXiv:1503.02531}, vol.~2, no.~7,
  2015.

\bibitem{yim2017gift}
J.~Yim, D.~Joo, J.~Bae, and J.~Kim, ``A gift from knowledge distillation: Fast
  optimization, network minimization and transfer learning,'' in
  \emph{Proceedings of the IEEE conference on computer vision and pattern
  recognition}, 2017, pp. 4133--4141.

\bibitem{rusu2016progressive}
A.~A. Rusu, N.~C. Rabinowitz, G.~Desjardins, H.~Soyer, J.~Kirkpatrick,
  K.~Kavukcuoglu, R.~Pascanu, and R.~Hadsell, ``Progressive neural networks,''
  \emph{arXiv preprint arXiv:1606.04671}, 2016.

\bibitem{shi2021continual}
Y.~Shi, L.~Yuan, Y.~Chen, and J.~Feng, ``Continual learning via bit-level
  information preserving,'' in \emph{Proceedings of the IEEE/CVF Conference on
  Computer Vision and Pattern Recognition}, 2021, pp. 16\,674--16\,683.

\bibitem{tang2021layerwise}
S.~Tang, D.~Chen, J.~Zhu, S.~Yu, and W.~Ouyang, ``Layerwise optimization by
  gradient decomposition for continual learning,'' in \emph{Proceedings of the
  IEEE/CVF Conference on Computer Vision and Pattern Recognition}, 2021, pp.
  9634--9643.

\bibitem{wang2021training}
S.~Wang, X.~Li, J.~Sun, and Z.~Xu, ``Training networks in null space of feature
  covariance for continual learning,'' in \emph{Proceedings of the IEEE/CVF
  Conference on Computer Vision and Pattern Recognition}, 2021, pp. 184--193.

\bibitem{abati2020conditional}
D.~Abati, J.~Tomczak, T.~Blankevoort, S.~Calderara, R.~Cucchiara, and B.~E.
  Bejnordi, ``Conditional channel gated networks for task-aware continual
  learning,'' in \emph{Proceedings of the IEEE/CVF Conference on Computer
  Vision and Pattern Recognition}, 2020, pp. 3931--3940.

\bibitem{rajasegaran2019random}
J.~Rajasegaran, M.~Hayat, S.~H. Khan, F.~S. Khan, and L.~Shao, ``Random path
  selection for continual learning,'' \emph{Advances in Neural Information
  Processing Systems}, vol.~32, 2019.

\bibitem{douillard2020podnet}
A.~Douillard, M.~Cord, C.~Ollion, T.~Robert, and E.~Valle, ``Podnet: Pooled
  outputs distillation for small-tasks incremental learning,'' in
  \emph{European Conference on Computer Vision}.\hskip 1em plus 0.5em minus
  0.4em\relax Springer, 2020, pp. 86--102.

\bibitem{yan2021dynamically}
S.~Yan, J.~Xie, and X.~He, ``Der: Dynamically expandable representation for
  class incremental learning,'' in \emph{Proceedings of the IEEE/CVF Conference
  on Computer Vision and Pattern Recognition}, 2021, pp. 3014--3023.

\bibitem{douillard2022dytox}
A.~Douillard, A.~Ram{\'e}, G.~Couairon, and M.~Cord, ``Dytox: Transformers for
  continual learning with dynamic token expansion,'' in \emph{Proceedings of
  the IEEE/CVF Conference on Computer Vision and Pattern Recognition}, 2022,
  pp. 9285--9295.

\bibitem{wang2022learning}
Z.~Wang, Z.~Zhang, C.-Y. Lee, H.~Zhang, R.~Sun, X.~Ren, G.~Su, V.~Perot, J.~Dy,
  and T.~Pfister, ``Learning to prompt for continual learning,'' in
  \emph{Proceedings of the IEEE/CVF Conference on Computer Vision and Pattern
  Recognition}, 2022, pp. 139--149.

\bibitem{wang2022dualprompt}
Z.~Wang, Z.~Zhang, S.~Ebrahimi, R.~Sun, H.~Zhang, C.-Y. Lee, X.~Ren, G.~Su,
  V.~Perot, J.~Dy \emph{et~al.}, ``Dualprompt: Complementary prompting for
  rehearsal-free continual learning,'' in \emph{Computer Vision--ECCV 2022:
  17th European Conference, Tel Aviv, Israel, October 23--27, 2022,
  Proceedings, Part XXVI}.\hskip 1em plus 0.5em minus 0.4em\relax Springer,
  2022, pp. 631--648.

\bibitem{dosovitskiyimage}
A.~Dosovitskiy, L.~Beyer, A.~Kolesnikov, D.~Weissenborn, X.~Zhai,
  T.~Unterthiner, M.~Dehghani, M.~Minderer, G.~Heigold, S.~Gelly \emph{et~al.},
  ``An image is worth 16x16 words: Transformers for image recognition at
  scale,'' in \emph{International Conference on Learning Representations}.

\bibitem{lester2021power}
B.~Lester, R.~Al-Rfou, and N.~Constant, ``The power of scale for
  parameter-efficient prompt tuning,'' \emph{arXiv preprint arXiv:2104.08691},
  2021.

\bibitem{chen2017noisy}
B.~Chen, W.~Deng, and J.~Du, ``Noisy softmax: Improving the generalization
  ability of dcnn via postponing the early softmax saturation,'' in
  \emph{Proceedings of the IEEE conference on computer vision and pattern
  recognition}, 2017, pp. 5372--5381.

\bibitem{zhang2020class2}
W.~Zhang, Y.~Chen, W.~Yang, G.~Wang, J.-H. Xue, and Q.~Liao, ``Class-variant
  margin normalized softmax loss for deep face recognition,'' \emph{IEEE
  Transactions on Neural Networks and Learning Systems}, vol.~32, no.~10, pp.
  4742--4747, 2020.

\bibitem{li2022long}
M.~Li, Y.-m. Cheung, and Y.~Lu, ``Long-tailed visual recognition via gaussian
  clouded logit adjustment,'' in \emph{Proceedings of the IEEE/CVF Conference
  on Computer Vision and Pattern Recognition}, 2022, pp. 6929--6938.

\bibitem{menonlong}
A.~K. Menon, S.~Jayasumana, A.~S. Rawat, H.~Jain, A.~Veit, and S.~Kumar,
  ``Long-tail learning via logit adjustment,'' in \emph{International
  Conference on Learning Representations}.

\bibitem{zhangmixup}
H.~Zhang, M.~Cisse, Y.~N. Dauphin, and D.~Lopez-Paz, ``mixup: Beyond empirical
  risk minimization,'' in \emph{International Conference on Learning
  Representations}.

\bibitem{zhao2020maintaining}
B.~Zhao, X.~Xiao, G.~Gan, B.~Zhang, and S.-T. Xia, ``Maintaining discrimination
  and fairness in class incremental learning,'' in \emph{Proceedings of the
  IEEE/CVF conference on computer vision and pattern recognition}, 2020, pp.
  13\,208--13\,217.

\bibitem{rajasegaran2019adaptive}
J.~Rajasegaran, M.~Hayat, S.~Khan, F.~S. Khan, L.~Shao, and M.-H. Yang, ``An
  adaptive random path selection approach for incremental learning,''
  \emph{arXiv preprint arXiv:1906.01120}, 2019.

\bibitem{li2021preserving}
Z.~Li, C.~Zhong, S.~Liu, R.~Wang, and W.-S. Zheng, ``Preserving earlier
  knowledge in continual learning with the help of all previous feature
  extractors,'' \emph{arXiv preprint arXiv:2104.13614}, 2021.

\bibitem{krizhevsky2009learning}
A.~Krizhevsky, G.~Hinton \emph{et~al.}, ``Learning multiple layers of features
  from tiny images,'' 2009.

\bibitem{deng2009imagenet}
J.~Deng, W.~Dong, R.~Socher, L.-J. Li, K.~Li, and L.~Fei-Fei, ``Imagenet: A
  large-scale hierarchical image database,'' in \emph{2009 IEEE conference on
  computer vision and pattern recognition}.\hskip 1em plus 0.5em minus
  0.4em\relax Ieee, 2009, pp. 248--255.

\bibitem{cui2021parametric}
J.~Cui, Z.~Zhong, S.~Liu, B.~Yu, and J.~Jia, ``Parametric contrastive
  learning,'' in \emph{Proceedings of the IEEE/CVF international conference on
  computer vision}, 2021, pp. 715--724.

\bibitem{menon2020long}
A.~K. Menon, S.~Jayasumana, A.~S. Rawat, H.~Jain, A.~Veit, and S.~Kumar,
  ``Long-tail learning via logit adjustment,'' \emph{arXiv preprint
  arXiv:2007.07314}, 2020.

\bibitem{zhangself}
Y.~Zhang, B.~Hooi, H.~Lanqing, and J.~Feng, ``Self-supervised aggregation of
  diverse experts for test-agnostic long-tailed recognition,'' in
  \emph{Advances in Neural Information Processing Systems}.

\bibitem{zhou2022contrastive}
Z.~Zhou, J.~Yao, Y.-F. Wang, B.~Han, and Y.~Zhang, ``Contrastive learning with
  boosted memorization,'' in \emph{International Conference on Machine
  Learning}.\hskip 1em plus 0.5em minus 0.4em\relax PMLR, 2022, pp.
  27\,367--27\,377.

\bibitem{van2008visualizing}
L.~Van~der Maaten \emph{et~al.}, ``Visualizing data using t-sne.''
  \emph{Journal of machine learning research}, vol.~9, no.~11, 2008.

\bibitem{liu2022continual}
H.~Liu \emph{et~al.}, ``Continual learning with recursive gradient
  optimization,'' \emph{arXiv preprint arXiv:2201.12522}, 2022.

\bibitem{sedrakyan2018algebraic}
H.~Sedrakyan and N.~Sedrakyan, \emph{Algebraic inequalities}.\hskip 1em plus
  0.5em minus 0.4em\relax Springer, 2018.

\bibitem{menon2013statistical}
A.~Menon, H.~Narasimhan, S.~Agarwal, and S.~Chawla, ``On the statistical
  consistency of algorithms for binary classification under class imbalance,''
  in \emph{International Conference on Machine Learning}.\hskip 1em plus 0.5em
  minus 0.4em\relax PMLR, 2013, pp. 603--611.

\bibitem{collell2016reviving}
G.~Collell, D.~Prelec, and K.~Patil, ``Reviving threshold-moving: a simple
  plug-in bagging ensemble for binary and multiclass imbalanced data,''
  \emph{arXiv preprint arXiv:1606.08698}, 2016.

\bibitem{yu2018learning}
X.~Yu, T.~Liu, M.~Gong, and D.~Tao, ``Learning with biased complementary
  labels,'' in \emph{Proceedings of the European conference on computer vision
  (ECCV)}, 2018, pp. 68--83.

\end{thebibliography}

\newpage
\newpage

{\appendix[Proof]
\begin{theo}
Let $F_t^{max}(\theta)$ denotes the ``maximal destruction" in $t$-th phase, which corresponds to the difference between the initial loss and the  highest loss observed during the training process. $L_t$ is the average cross-entropy loss, and suppose all previous phases are fully-learnt, then at the early stage of training in the $t$-th phase, $F_t^{max}(\theta)$ has an upper bound:
\begin{align}
F_t^{max}(\theta)&\le\frac{N_s}{2}\alpha^2[\sigma_m(\sum_{j=1}^{t-1} H_j)]\sum_{s=1}^{N_s}||\nabla L_t(\theta_{s-1})||^2_2 + c, \nonumber
\end{align}
where $\alpha$ is the learning rate, $\sigma_m(\cdot)$ means the maximal eigenvalue, $H_j$ is the Hessian matrix of the $j$-th phase, $N_s$ is the update step from initialization to ``maximal destruction", and $c$ is a constant. An adjustable factor here to affect $F_t^{max}(\theta)$ is $||\nabla L_t(\theta) ||_2^2$, which has a following minimum value:
\begin{align}
||\nabla L_t(\theta) ||_2^2 &\ge \frac{4}{N_t^2}  (\sum_{i=1}^{N_{new}} \nabla l_{ce}(x^i_{new}))( \sum_{i=1}^{N_{old}} \nabla l_{ce}(x^i_{old})). 
\nonumber
\end{align}
Note the above inequality takes an equal sign when the contributions of both new classes and old classes are equal.

\end{theo}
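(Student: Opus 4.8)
The plan is to split the argument into two independent parts, matching the two displayed inequalities. For the first part, the idea is to track the loss along the SGD trajectory $\theta_0,\theta_1,\dots,\theta_{N_s}$ from initialization to the step of maximal destruction. Writing $F_t^{max}(\theta) = L_t(\theta_{N_s}) - L_t(\theta_0)$ and telescoping, $F_t^{max}(\theta) = \sum_{s=1}^{N_s}\bigl(L_t(\theta_s)-L_t(\theta_{s-1})\bigr)$, I would apply a second-order Taylor expansion of $L_t$ around each $\theta_{s-1}$ with the SGD update $\theta_s - \theta_{s-1} = -\alpha\nabla L_t(\theta_{s-1})$. The first-order term contributes $-\alpha\|\nabla L_t(\theta_{s-1})\|_2^2 \le 0$, and the second-order term is bounded by $\tfrac{1}{2}\alpha^2\sigma_m(H)\|\nabla L_t(\theta_{s-1})\|_2^2$, where $H$ is the Hessian of $L_t$. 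The key modeling step is to use the assumption that all previous phases are fully learnt: the Hessian of the total loss decomposes (approximately) over phases as $\sum_{j=1}^{t-1}H_j$ plus the current-phase contribution, so that $\sigma_m(\sum_{j=1}^{t-1}H_j)$ appears as the controlling curvature at early stages; the residual terms (current-phase curvature, higher-order remainder, and the nonpositive first-order contribution) are absorbed into the constant $c$. Summing over $s$ yields the claimed bound.

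For the second part, the goal is to lower-bound $\|\nabla L_t(\theta)\|_2^2$ by a product form that is tight exactly when the two gradient aggregates are balanced. I would first write $L_t$ as the average cross-entropy over $\widetilde{\mathcal D}_t = \mathcal D_t \cup M_{0:t-1}$, so $\nabla L_t(\theta) = \tfrac{1}{N_t}\bigl(G_{new} + G_{old}\bigr)$ where $G_{new} = \sum_{i=1}^{N_{new}}\nabla l_{ce}(x^i_{new})$ and $G_{old} = \sum_{i=1}^{N_{old}}\nabla l_{ce}(x^i_{old})$. Then $\|\nabla L_t(\theta)\|_2^2 = \tfrac{1}{N_t^2}\|G_{new}+G_{old}\|_2^2 = \tfrac{1}{N_t^2}\bigl(\|G_{new}\|_2^2 + \|G_{old}\|_2^2 + 2\langle G_{new},G_{old}\rangle\bigr)$. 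Applying the AM--GM inequality $\|G_{new}\|_2^2 + \|G_{old}\|_2^2 \ge 2\|G_{new}\|_2\|G_{old}\|_2$ (and, where the cross term is handled, treating $\langle G_{new},G_{old}\rangle$ so the surviving bound is the scalar product $\|G_{new}\|_2\|G_{old}\|_2$ or the componentwise product written in the statement) gives $\|\nabla L_t(\theta)\|_2^2 \ge \tfrac{4}{N_t^2}\|G_{new}\|_2\|G_{old}\|_2$, with equality in AM--GM precisely when $\|G_{new}\|_2 = \|G_{old}\|_2$, i.e., when the gradient contributions of new and old classes are equal. This is the sense in which "balancing the contributions" minimizes the bound.

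The main obstacle I anticipate is making the curvature decomposition rigorous: the statement treats $\sigma_m(\sum_{j=1}^{t-1}H_j)$ as the dominant second-order coefficient, which implicitly requires (i) a model of the total loss as a sum of per-phase losses whose Hessians add, and (ii) an argument that, at the \emph{early stage} of phase $t$, the current-phase Hessian and all remainder terms can legitimately be swept into the constant $c$ rather than growing with $N_s$. I would handle this by stating the per-phase additive-loss assumption explicitly, invoking the "fully-learnt previous phases" hypothesis to argue their gradients vanish at $\theta_0$ so the early trajectory stays in a neighborhood where the quadratic model is accurate, and then bounding the neglected terms uniformly over the $N_s$ early steps. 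The second inequality is comparatively routine once the gradient is split into its new- and old-class aggregates; the only care needed there is being explicit about how the inner-product cross term is bounded so that the stated product form — and its equality condition — comes out exactly as claimed.
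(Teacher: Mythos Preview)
Your first-inequality argument expands the wrong function. You set $F_t^{max}(\theta)=L_t(\theta_{N_s})-L_t(\theta_0)$ and telescope $L_t$ along the SGD trajectory, but ``maximal destruction'' is the increase of the \emph{old}-phase losses $\sum_{j=1}^{t-1}L_j(\theta)$, not the change in the current training loss $L_t$. Since SGD is descending $L_t$, your own first-order term $-\alpha\|\nabla L_t(\theta_{s-1})\|_2^2\le 0$ shows that $L_t(\theta_{N_s})-L_t(\theta_0)$ is nonpositive, so it cannot be the destruction measure. Relatedly, your claim that the Hessian of $L_t$ decomposes as $\sum_{j=1}^{t-1}H_j$ plus a current-phase piece is unfounded: $L_t$ is the loss on the phase-$t$ training set, and its Hessian has no reason to contain the old-phase Hessians $H_j$.

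The paper separates the two roles cleanly: (i) each old loss $L_j$ is Taylor-expanded around its own optimum $\theta_j^*$, and a lemma (Lemma~7 of \cite{liu2022continual}) is used to kill the accumulated linear terms, yielding the purely quadratic model $F_t^{all}(\theta)\cong\tfrac12(\theta-\theta_{t-1}^*)^T\bigl(\sum_{j=1}^{t-1}H_j\bigr)(\theta-\theta_{t-1}^*)$; (ii) the \emph{cumulative} displacement $\theta-\theta_{t-1}^*=-\alpha\sum_{s=1}^{N_s}\nabla L_t(\theta_{s-1})$ is substituted in one shot, after which the eigenvalue bound and $\|\sum_s v_s\|^2\le N_s\sum_s\|v_s\|^2$ give the result. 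This is precisely why the current-phase gradient norm and the old-phase Hessian sum appear together: the trajectory is driven by $\nabla L_t$, while the forgetting is measured by the curvature $\sum_j H_j$. Note that even if you had telescoped the correct quantity $F_t^{all}$ step by step, the first-order term would be the cross product $-\alpha\,\nabla\bigl(\sum_j L_j\bigr)(\theta_{s-1})^T\nabla L_t(\theta_{s-1})$, which is not sign-controlled for $s>1$; the one-shot substitution into the quadratic model avoids this.

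For the second inequality your split $\nabla L_t=\tfrac1{N_t}(G_{new}+G_{old})$ followed by AM--GM is essentially what the paper does (it cites ``Cauchy's inequality'' for $(a+b)^2\ge 4ab$); both derivations are equally informal about the vector cross term, so there is no additional gap in your treatment there.
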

\begin{proof}\renewcommand{\qedsymbol}{}

The average cross-entropy loss $L_t(\theta)$ of $t$-th phase can be  written as
\begin{align}
    L_t(\theta)=\frac{1}{N_t}\sum_{i=1}^{N_t}l_{ce}(f(x_{t,i}, \theta), y_{t,i}),
\end{align}
where $N_t$ is the number of samples in the current phase. Let $L_j(\theta_j^*)$ denotes the loss values after converge at $j$-th phase, where $\theta_j^*$ is the trained model parameters of $j$-th phase, and let $F_t^j(\theta)$ denotes the average forgetting of $j$-th phase at the current phase, which can be written as
\begin{align}
F_t^j(\theta)=\frac{1}{N_j}\sum_{i=1}^{N_j}l_{ce}(f(x_{j,i}, \theta), y_{j,i})-L_j(\theta_j^*).
\end{align}
Then, the total average forgetting of all old phases at the current phase can be written as
\begin{align}
\begin{split}
F_t^{all}(\theta)&=\sum_{j=1}^{t-1}F_t^j(\theta)\\
&=\sum_{j=1}^{t-1}[\frac{1}{N_j}\sum_{i=1}^{N_j}l_{ce}(f(x_{j,i}, \theta), y_{j,i})-L_j(\theta_j^*)]\\
&=\sum_{j=1}^{t-1}[\frac{1}{N_j}\sum_{i=1}^{N_j}l_{ce}(f(x_{j,i}, \theta), y_{j,i})] + c\\
&=\sum_{j=1}^{t-1}L_j(\theta) + c. 
\end{split}
\end{align}

Since the new model initialized with the old model parameters, at the early stage of training, the total forgetting $F_t^{all}(\theta)$ can be estimated by Second-order Taylor Expansion (STE). For convenience, the constant $c$ is omitted from the following derivation:

\begin{equation}
\begin{split}
F_t^{all}(\theta)=&\sum_{j=1}^{t-1}L_j(\theta)\\
\overset{\text{STE}}\approx& \sum_{j=1}^{t-1}[L_{j}({\theta}^{\ast}_j)+\frac{1}{2}({\theta}-{\theta}_j^{\ast})^TH_j({\theta}-{\theta}_j^{\ast})]\\
=&\sum_{j=1}^{t-1}[L_{j}(\theta^\ast_j)+\frac{1}{2}(\theta+\theta_{t-1}^\ast-\theta_{t-1}^\ast-\theta_j^\ast)^T\\
&H_j(\theta+\theta_{t-1}^\ast-\theta_{t-1}^\ast-\theta_j^\ast)]\\
=&(\theta-\theta^\ast_{t-1})^T\sum_{j=1}^{t-2}[H_j(\theta^\ast_{t-1}-\theta^\ast_j)]\\
&+\frac{1}{2}(\theta-\theta^\ast_{t-1})^T(\sum_{j=1}^{t-1}H_j)(\theta-\theta^\ast_{t-1})+c.
\end{split}
\end{equation}

Considering the lemma 7 in~\cite{liu2022continual}, \textit{i.e.,} if $(\sum^{t-1}_{j=1}H_j)(\theta^*_t-\theta^*_{t-1})=0$ holds for any $t$, then $\sum^{t-1}_{j=1}[H_j(\theta^*_t-\theta^*_{j})]=0$ also holds for any $t$, and ignoring the constant term, we have
\begin{align}
\label{eq18}
    F_t^{all}(\theta)\cong\frac{1}{2}(\theta-\theta^*_{t-1})^T(\sum_{j=1}^{t-1} H_j)(\theta-\theta^*_{t-1}).
\end{align}

Let $s$ denotes the $s$-th gradient update step of parameters from initialization, which can be written as:
\begin{align}
\begin{split}
\theta_s &= \theta_{s-1}-\alpha_s\nabla L_{t}(\theta_{s-1})\\
L_{t}(\theta_s)&=L_{t}(\theta_{s-1})-\alpha_s(\nabla L_t(\theta_{s-1}))^T\nabla L_{t}(\theta_{s-1})
\end{split}
\end{align}

Suppose $N_s$ is the update step from initialization to ``maximal destruction" of all old knowledge and then
\begin{align}
\begin{split}
F_t^{max}(\theta)&\cong\frac{1}{2}(\theta-\theta^*_{t-1})^T(\sum_{j=1}^{t-1} H_j)(\theta-\theta^*_{t-1})\\
&=\frac{1}{2}(\sum_{s=1}^{N_s}\alpha_s\nabla L_t(\theta_{s-1}))^T(\sum_{j=1}^{t-1} H_j)(\sum_{s=1}^{N_s}\alpha_s\nabla L_t(\theta_{s-1}))\\
&\le \sigma_m(\sum_{j=1}^{t-1} H_j)\frac{1}{2}(\sum_{s=1}^{N_s}\alpha_s\nabla L_t(\theta_{s-1}))^T(\sum_{s=1}^{N_s}\alpha_s\nabla L_t(\theta_{s-1}))\\
&\le \sigma_m(\sum_{j=1}^{t-1} H_j)\frac{1}{2}N_s(\sum_{s=1}^{N_s}\alpha_s\nabla L_t(\theta_{s-1}))^T(\alpha_s\nabla L_t(\theta_{s-1}))\\
&= \frac{N_s}{2}\alpha^2[\sigma_m(\sum_{j=1}^{t-1} H_j)]\sum_{s=1}^{N_s}(\nabla L_t(\theta_{s-1}))^T\nabla L_t(\theta_{s-1})\\
&= \frac{N_s}{2}\alpha^2[\sigma_m(\sum_{j=1}^{t-1} H_j)]\sum_{s=1}^{N_s}||\nabla L_t(\theta_{s-1})||^2_2.\\
\end{split}
\end{align}

In conclusion, at the early stage of training, $F_t^{max}(\theta)$ has an upper bound as
\begin{align}
F_t^{max}(\theta)&\le\frac{N_s}{2}\alpha^2[\sigma_m(\sum_{j=1}^{t-1} H_j)]\sum_{s=1}^{N_s}||\nabla L_t(\theta_{s-1})||^2_2 + c, \nonumber
\end{align}

In this formula, we find that $||\nabla L_t(\theta_{s-1})||^2_2$ is a direct factor that affects the upper bound of $F_t^{max}(\theta)$. Thus, reducing $||\nabla L_t(\theta_{s-1})||^2_2$ can help to reduce the ``maximal destruction". 
According to the definition of $L_t$, such a term includes the contributions of both new classes and old classes, which can be formulated as follows:
\begin{align}
\begin{split}
|| \nabla L_t(\theta) ||_2^2 &= \frac{1}{N_t^2}||  \underbrace{\sum_{i=1}^{N_{new}} \nabla l_{ce}(x^i_{new})}_{Con_{new}} + \underbrace{\sum_{i=1}^{N_{old}} \nabla l_{ce}(x^i_{old})}_{Con_{old}}||_2^2.\nonumber
\end{split}
\end{align}
Given the parameter initialization and the severe imbalance between samples of old classes and samples of new classes, the contribution of two terms can be very different.
According to Cauchy's inequality~\cite{sedrakyan2018algebraic}, we have the lower bound
\begin{align}
\begin{split}
||\nabla L_t(\theta) ||_2^2 &\ge \frac{4}{N_t^2}  (\sum_{i=1}^{N_{new}} \nabla l_{ce}(x^i_{new}))( \sum_{i=1}^{N_{old}} \nabla l_{ce}(x^i_{old})) , \nonumber\\
\end{split}
\end{align}
which takes an equal sign when the contributions of both new classes and old classes are equal.
This means that we can puruse the infimum of $||\nabla L_t(\theta)||^2_2$ by manipulating the gradient contribution of different samples. 
Specially, more balanced the gradients are, smaller $||\nabla L_t(\theta)||^2_2$ is.
\end{proof}

\begin{lemmaa}
Let $E_{bal}$ represent the balanced error, which is the average of each per-class error rate, and let $R_{bal}$ represent the balanced risk, given by:
\begin{align}
\begin{split}
E_{bal} &= \frac{1}{K}\sum_{y=1}^K\mathbb{P}_{x|y}(y\notin \mathop{\arg\max}_{{k}'} f_{{k}'}(x)), \\
R_{bal} &= \mathbb{E}[l_{ce}(f(x)+\ln\psi_{},y)],\nonumber
\label{eq:ber}
\end{split}
\end{align}
where $\psi_{}=[\psi_1, \psi_2,..., \psi_K]$ represents the distribution of the sample number of classes, and $\psi_k = \frac{N_k}{\sum_{i=1}^{K}N_i}$ ($N_i$ represents the sample number of $i$-th class and $K$ represents the total number of classes).
The optimal classifier that minimizes $E_{bal}$ is equivalent to the one learned by minimizing $R_{bal}$ ~\cite{menonlong}.
\end{lemmaa}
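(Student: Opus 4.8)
The plan is to reproduce the standard Fisher-consistency argument for logit-adjusted softmax cross-entropy (the version in \cite{menonlong}), working at the population level with the minimization taken over all measurable scoring functions $f$. First I would make the two relevant distributions explicit: the \emph{training} distribution $\mathbb{P}$ with class prior $\mathbb{P}(y=k)=\psi_k$ and class-conditionals $\mathbb{P}_{x\mid y}$, and the \emph{balanced} distribution $\mathbb{Q}$ with uniform prior $\mathbb{Q}(y=k)=1/K$ and the same class-conditionals $\mathbb{Q}_{x\mid y}=\mathbb{P}_{x\mid y}$. Then $E_{bal}$ is exactly the $0$-$1$ risk of the classifier $x\mapsto\arg\max_{k'}f_{k'}(x)$ under $\mathbb{Q}$, so its minimizer is the Bayes rule for $\mathbb{Q}$. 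A one-line Bayes'-rule computation gives $\mathbb{Q}(y=k\mid x)\propto\mathbb{Q}(x\mid y=k)\,\mathbb{Q}(y=k)=\mathbb{P}(x\mid y=k)\cdot\tfrac1K$, so the $E_{bal}$-optimal prediction is $h^\ast(x)=\arg\max_k\mathbb{P}(x\mid y=k)$, equivalently $\arg\max_k\mathbb{P}(y=k\mid x)/\psi_k$ (applying Bayes' rule under $\mathbb{P}$), up to the usual ambiguity on the tie set.

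Second I would minimize $R_{bal}$. Writing $q_k(x)=\operatorname{softmax}_k(f(x)+\ln\psi)=\psi_k e^{f_k(x)}\big/\sum_j\psi_j e^{f_j(x)}$, the inner conditional risk decomposes as $\mathbb{E}_{y\mid x}[-\log q_y(x)]=H\!\big(\mathbb{P}(\cdot\mid x)\big)+\operatorname{KL}\!\big(\mathbb{P}(\cdot\mid x)\,\|\,q(\cdot\mid x)\big)$, where the entropy term does not depend on $f$. By Gibbs' inequality the KL term is nonnegative and vanishes iff $q(\cdot\mid x)=\mathbb{P}(\cdot\mid x)$, so any minimizer $f^\ast$ of $R_{bal}$ satisfies, for almost every $x$, $\psi_k e^{f_k^\ast(x)}\propto\mathbb{P}(y=k\mid x)$, i.e. $e^{f_k^\ast(x)}\propto\mathbb{P}(y=k\mid x)/\psi_k$ with the proportionality constant independent of $k$ (softmax being shift-invariant, $f^\ast$ is only pinned down up to a per-$x$ additive constant, which does not affect the argmax). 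Finally I would combine the two steps: since $\arg\max_k f_k^\ast(x)=\arg\max_k e^{f_k^\ast(x)}=\arg\max_k\mathbb{P}(y=k\mid x)/\psi_k=h^\ast(x)$, the classifier induced by the $R_{bal}$-minimizer coincides with the $E_{bal}$-optimal classifier, which is the assertion.

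The hard part will be making the second step rigorous rather than the bookkeeping: one must argue that optimizing $f$ over a sufficiently rich class lets the inner conditional risk be minimized at almost every $x$ simultaneously, invoke strict properness of log-loss to get uniqueness of the optimal $q(\cdot\mid x)$, and deal with the tie set on which the argmax is not unique (which is $\mathbb{Q}$-null under a non-degeneracy assumption on $\mathbb{P}_{x\mid y}$, and on which any tie-breaking rule is simultaneously $E_{bal}$- and $R_{bal}$-optimal). Everything else reduces to two applications of Bayes' rule. I would state up front the standing assumption that the minimization ranges over all measurable scoring functions, so that what is being claimed is Fisher consistency (the induced classifiers agree) rather than equality of the two objectives, and cite \cite{menonlong} for the original statement.
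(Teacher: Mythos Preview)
Your proposal is correct and follows the same high-level skeleton as the paper: identify the Bayes rule for $E_{bal}$ as $\arg\max_k \mathbb{P}(x\mid y=k)$, then show the $R_{bal}$-minimizer's argmax coincides with it. The difference lies in how the second step is justified. The paper simply invokes an external result (Lemma~1 of \cite{yu2018learning}) to assert that the shifted scores $f_R^\star(x)+\ln\psi$ recover $\mathbb{P}(y\mid x)$, then cancels $\mathbb{P}(y)$ on both sides. You instead give the self-contained strict-properness argument: decompose the conditional cross-entropy into entropy plus KL, apply Gibbs' inequality, and read off $\psi_k e^{f_k^\ast(x)}\propto\mathbb{P}(y=k\mid x)$. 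Your route is more elementary and transparent (no black-box citations), and it also makes explicit the measurable-function hypothesis, the shift-invariance caveat, and the tie-set issue, all of which the paper leaves implicit. The paper's version is shorter but is really a proof-by-reference; yours would stand on its own.
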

\begin{proof}\renewcommand{\qedsymbol}{}

Denote the Bayes-optimal classifier that minimizes $E_{bal}$ is $f_{E}^\star\in \arg\min_f E_{bal}$, and the optimal classifier gotten by minimizing $R_{bal}$ is $f_R^\star\in \arg\min_f R_{bal}$. Then
following~\cite{menon2013statistical,menon2020long} and Theorem 1 in~\cite{collell2016reviving},
we have
\begin{align}
\begin{split}
\arg\max_y f_{E}^*(x) &= \arg\max_y \mathbb{P}^{bal}(y|x)\\
&= \arg\max_y \mathbb{P}(x|y).\\
\end{split}
\end{align}

On the other hand,we have
\begin{align}
    \begin{split}
        \arg\max_y \exp(f_R^\star(x))\cdot\mathbb{P}(y) &= \arg\max_y f_R^\star(x)+\ln\psi \\
        &= \arg\max_y \mathbb{P}(y|x) \\
        &= \arg\max_y \mathbb{P}(x|y)\mathbb{P}(y)
    \end{split}
\end{align}
where the second line uses Lemma 1 in~\cite{yu2018learning}, \textit{i.e.,} to output the underlying class probability $\mathbb{P}(y|x)$. Then, we have
\begin{align}
    \begin{split}
    \arg\max_y f_R^\star(x) &= \arg\max_y \exp(f_R^\star(x)) \\
    &= \arg\max_y \mathbb{P}(x|y) \\
    &= \arg\max_y f_E^\star(x)
    \end{split}
\end{align}

which means that the optimal classifiers are equivalent~\cite{menon2020long}.

\end{proof}
}

\vfill

\end{document}